\def\a{{\bf a}}
\def\d{{\bf d}}
\def\g{{\bf g}}
\def\u{{\bf u}}
\def\v{{\bf v}}
\def\w{{\bf w}}
\def\z{{\bf z}}
\def\D{{\bf D}}
\def\H{{\bf H}}
\def\I{{\bf I}}
\def\0{{\bf 0}}
\def\1{{\bf 1}}
\def\2{{\bf 2}}
\def\3{{\bf 3}}
\def\4{{\bf 4}}
\def\5{{\bf 5}}
\def\6{{\bf 6}}
\def\7{{\bf 7}}
\def\8{{\bf 8}}
\def\9{{\bf 9}}
\def\NM{{\mathcal N}}
\def\DB{{\mathbb D}}
\def\EB{{\mathbb E}}
\def\PB{{\mathbb P}}
\def\RB{{\mathbb R}}
\begin{document}

\title{Stagewise Enlargement of Batch Size for SGD-based Learning}

\author{\name Shen-Yi Zhao \email zhaosy@lamda.nju.edu.cn \\
		\name Yin-Peng Xie \email xieyp@lamda.nju.edu.cn \\
		\name Wu-Jun Li \email liwujun@nju.edu.cn \\
        \addr Department of Computer Science and Technology\\
        Nanjing University, China}


\maketitle

\begin{abstract}
Existing research shows that the batch size can seriously affect the performance of stochastic gradient descent~(SGD) based learning, including training speed and generalization ability. A larger batch size typically results in less parameter updates. In distributed training, a larger batch size also results in less frequent communication. However, a larger batch size can make a generalization gap more easily. Hence, how to set a proper batch size for SGD has recently attracted much attention. Although some methods about setting batch size have been proposed, the batch size problem has still not been well solved. In this paper, we first provide theory to show that a proper batch size is related to the gap between initialization and optimum of the model parameter. Then based on this theory, we propose a novel method, called \underline{s}tagewise \underline{e}nlargement of \underline{b}atch \underline{s}ize~(\mbox{SEBS}), to set proper batch size for SGD. More specifically, \mbox{SEBS} adopts a multi-stage scheme, and enlarges the batch size geometrically by stage. We theoretically prove that, compared to classical stagewise SGD which decreases learning rate by stage, \mbox{SEBS} can reduce the number of parameter updates without increasing generalization error. SEBS is suitable for \mbox{SGD}, momentum \mbox{SGD} and AdaGrad. Empirical results on real data successfully verify the theories of \mbox{SEBS}. Furthermore, empirical results also show that SEBS can outperform other baselines.

\end{abstract}

\begin{keywords}
  SGD, Batch size. ss
\end{keywords}

\section{Introduction}
Many machine learning models can be formulated as the following empirical risk minimization~(ERM) problem:
\begin{align}\label{equation:erm}
  \min_{\w\in \RB^d} F(\w) = \frac{1}{n}\sum_{i=1}^{n} f(\w;\xi_i),
\end{align}
where $\w$ denotes the model parameter, $\mathcal{I} = \{\xi_1,\xi_2,\ldots,$ $\xi_n\}$ denotes the set of training instances sampled from distribution $\DB$, and $f(\w;\xi_i)$ denotes the loss on the $i$-th training instance.

With the rapid growth of data, stochastic gradient descent~(SGD) and mini-batch SGD~\citep{robbins1951,Bottou:1999:OLS:304710.304720} have become the most popular methods for solving the ERM problem in~(\ref{equation:erm}), and many variants of SGD have been proposed. Among these algorithms, the classical and most widely used one is the stagewise SGD which has been adopted in~\citep{DBLP:conf/nips/KrizhevskySH12,DBLP:conf/cvpr/HeZRS16}. Stagewise SGD is based on a multi-stage learning scheme. At the $s$-th stage, it runs the following iterations:
\begin{equation}\label{equation:sgd}
	\w_{m+1} = \w_{m} - \eta_s(\frac{1}{b}\sum_{\xi\in \mathcal{B}_m}\nabla f(\w_{m};\xi)),
\end{equation}
where $\w_1 = \tilde{\w}_s$ is the initialization, $m=1,2,\ldots,M_s$, $\mathcal{B}_m \subset\mathcal{I}$ is a mini-batch of instances randomly sampled from $\mathcal{I}$ with a \emph{batch size} $|\mathcal{B}_m| = b$, $\eta_s$ is the learning rate which is a constant at each stage and decreases geometrically by stage. After the $s$-th stage is completed, the algorithm randomly picks a parameter from $\{\w_{m}\}$ or the last one $\w_{M_s+1}$ as the initialization of the next stage. For stagewise SGD with $S$ stages, the \emph{computation complexity}~(total number of gradient computation) is $\sum_s^S M_s b$ and the \emph{iteration complexity}~(total number of parameter updates) is $\sum_s^S M_s$. Recently, some work~\citep{DBLP:conf/nips/NIPS2019_8529} theoretically proves that the stagewise SGD is better than the original SGD which adopts the polynomially decreased learning rate under the weakly quasi-convex and Polyak-Lojasiewicz~(PL) condition. Classical stagewise SGD methods~\citep{DBLP:conf/nips/KrizhevskySH12,DBLP:conf/cvpr/HeZRS16} mainly focus on how to set the learning rate for a given constant batch size which is typically not too large.

From~(\ref{equation:sgd}), we can find that given a fixed computation complexity, a larger batch size will result in less parameter updates. In distributed training, each parameter update typically needs one time of communication, and hence a larger batch size will result in less frequent communication. Furthermore, a larger batch size can typically better utilize the computing power of current multi-core systems like \mbox{GPU} to reduce computation time, as long as the mini-batch does not exceed the memory or computing limit of the system. Figure~\ref{figure:training speed} gives an example to show that enlarging batch size can reduce computation time. Hence, we need to choose a larger batch size for SGD to reduce computation time if we do not take generalization error into consideration. However, a larger batch size can make a generalization gap more easily~\citep{DBLP:conf/iclr/KeskarMNST17,LeCun2012}. Some work~\citep{DBLP:conf/nips/HofferHS17} points out that we need to train longer~(with higher computation complexity) for larger batch training to achieve a similar generalization error as that of smaller batch training. This is contrary to the original intention of large batch training. Hence, how to set a proper batch size for SGD has become an interesting but challenging topic.

\begin{figure}[t]
\centering
\includegraphics[width = 7cm]{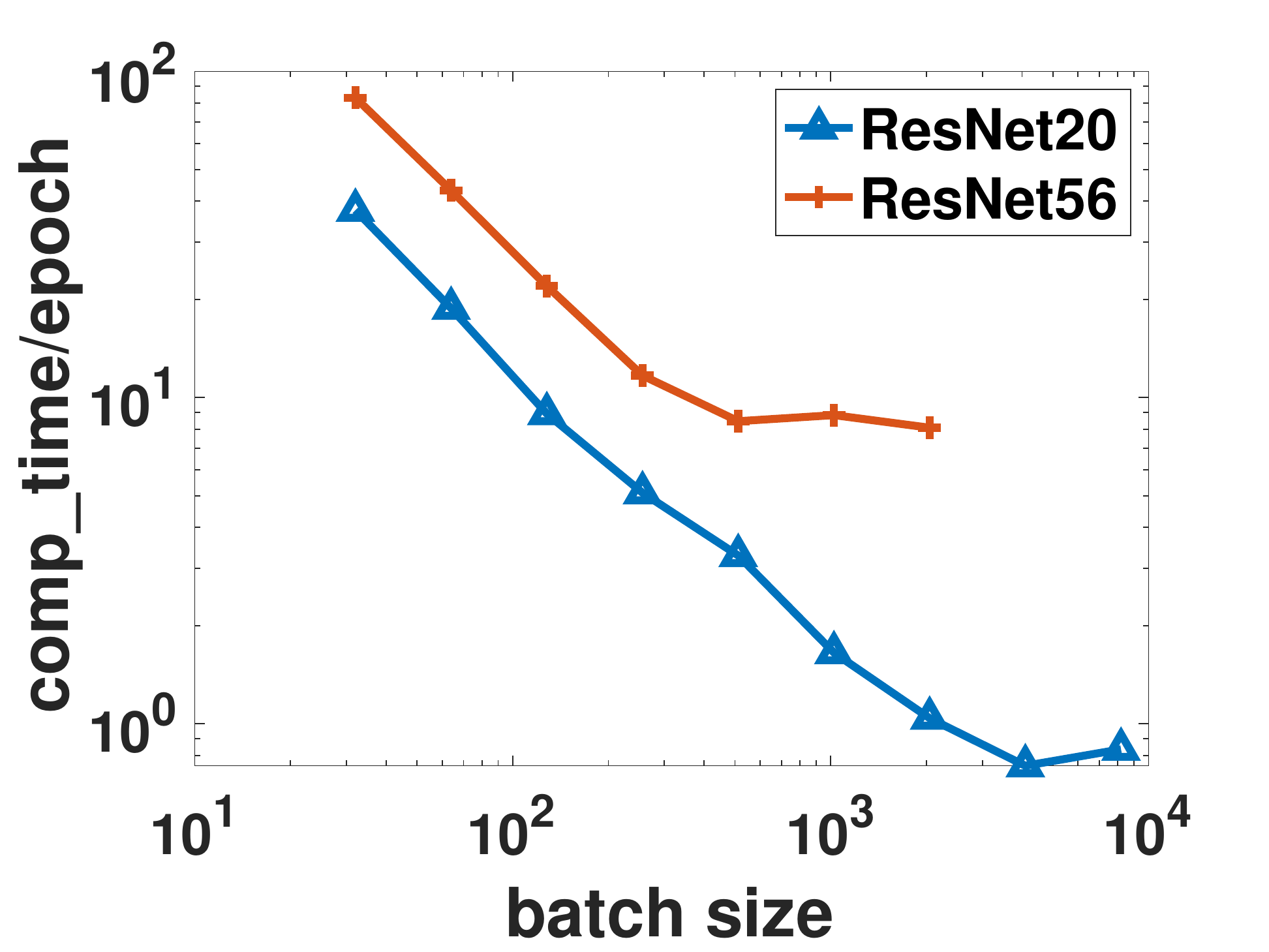}
\caption{Using an NVIDIA V100 GPU to train models on \mbox{CIFAR10}. The y-axis denotes computation time per epoch. In ResNet20, volatile gpu-util achieves $100\%$ when $b = 4096$. In ResNet56, volatile gpu-util achieves $100\%$ when $b = 512$.}\label{figure:training speed}
\vspace{-0.5cm}
\end{figure}

There have appeared some works proposing heuristic methods for large batch training~\citep{DBLP:journals/corr/GoyalDGNWKTJH17,DBLP:journals/corr/abs-1708-03888,DBLP:journals/corr/abs-1812-06162}.  Compared to classical stagewise SGD methods with a small constant batch size and stagewisely decreased learning rate, these large batch methods need more tricks, which should be carefully tuned on different models and data sets. Furthermore, theoretical guarantee about the iteration complexity and generalization error of these methods is missing. In addition, in our experiments we find that these methods might increase generalization error if a large batch size is adopted from the initialization.

There have also appeared some other methods proposing to dynamically set the batch size. ~\citep{DBLP:journals/siamsc/FriedlanderS12,DBLP:journals/mp/ByrdCNW12,DBLP:conf/aistats/DeYJG17,DBLP:conf/aistats/YinPLPRB18} relate the batch size with the noise of stochastic gradients. These methods need to determine the batch size in each iteration, which will bring much extra cost. ~\citep{DBLP:conf/iclr/SmithKYL18} increases the batch size by relating SGD with a stochastic differential equation. However, the theoretical guarantee about the iteration complexity and generalization error is missing. Furthermore, some work~\citep{DBLP:conf/icml/YuJ19} uses the stagewise training strategy. At each stage, the batch size starts from a small constant and is geometrically increased by iteration. However, the scaling ratio for the batch size cannot be large for convergence guarantee. Furthermore, in our experiments we also find that it might increase generalization error.

In this paper, we propose a novel method, called \underline{s}tagewise \underline{e}nlargement of \underline{b}atch \underline{s}ize~(\mbox{SEBS}), to set proper batch size for SGD. The main contributions of this paper are outlined as follows:
\begin{itemize}
	\item We first provide theory\footnote{Due to space limitation, we only present the Lemmas and Theorems in the main text, and the detailed proof can be found in the supplementary material.} to show that a proper batch size is related to the gap between initialization and optimum of the model parameter. Then based on this theory, we propose \mbox{SEBS} which adopts a multi-stage scheme and enlarges the batch size geometrically by stage.
\item We theoretically prove that decreasing learning rate and enlarging batch size have the same effect on the performance of stagewise SGD.
\item We theoretically prove that, compared to classical stagewise SGD which decreases learning rate by stage, \mbox{SEBS} can reduce the number of parameter updates~(iteration complexity) without increasing generalization error when the total number of gradient computation~(computation complexity) is fixed.
\item Besides \mbox{SGD}, SEBS is also suitable for momentum \mbox{SGD} and adaptive gradient descent~(\mbox{AdaGrad})~\citep{DBLP:conf/colt/DuchiHS10}. We also provide theoretical results about the number of parameter updates~(iteration complexity) for momentum \mbox{SGD} and AdaGrad. To the best of our knowledge, this is the first work that analyzes the effect of batch size on the convergence of AdaGrad~\footnote{In this paper, AdaGrad refers to the coordinate form adaptive gradient descent~\citep{DBLP:conf/colt/DuchiHS10}.}.
\item Empirical results on real data successfully verify the theories of \mbox{SEBS}. Furthermore, empirical results also show that SEBS can outperform other baselines.
\end{itemize}

\section{Preliminaries}
First, we give the following notations. $\|\cdot\|$ denotes the $L_2$ norm. $\|\cdot\|_\infty$ denotes the $L_{\infty}$ norm. $\w^*$ denotes the optimal solution~(optimum) of~(\ref{equation:erm}). $\nabla f_{\mathcal{B}}(\w) \triangleq \frac{1}{|\mathcal{B}|}\sum_{i\in \mathcal{B}}\nabla f(\w;\xi_i)$ denotes the stochastic gradient of the mini-batch $\mathcal{B}$. $\forall \a\in \RB^d$, we use $a^{(j)}$ to denote the $j$-th element of $\a$.

We also make the following assumptions.
\begin{assumption}\label{assumption:bounded var}
The variance of stochastic gradient is bounded: $\forall \w$, $\EB_{i\sim[n]}\|\nabla f(\w;\xi_i) - \nabla F(\w)\| \leq \sigma^2$.
\end{assumption}

\begin{assumption}\label{assumption:smooth}
  $f(\w;\xi)$ is $L$-smooth~($L>0$): $\forall \w, \w', $ $\xi\sim \DB$, $\|\nabla f(\w;\xi) - \nabla f(\w';\xi)\| \leq L\|\w - \w'\|$.
\end{assumption}

\begin{assumption}\label{assumption:quasi convex}
  $F(\w)$ is $\alpha$-weakly quasi-convex~($\alpha>0$):
  \begin{align*}
    \nabla F(\w)^T(\w - \w^*) \geq \alpha(F(\w) - F(\w^*)), \forall \w.
  \end{align*}
\end{assumption}

\begin{assumption}\label{assumption:pl condition}
  $F(\w)$ satisfies $\mu$-Polyak Lojasiewicz~($\mu$-PL, $\mu>0$) condition:
  \begin{align*}
    \|\nabla F(\w)\|^2 \geq 2\mu(F(\w) - F(\w^*)), \forall \w.
  \end{align*}
\end{assumption}
Recently, both weak quasi-convexity and PL condition have been observed for many machine learning models, including deep neural networks~\citep{DBLP:conf/icml/CharlesP18,DBLP:conf/nips/NIPS2019_8529}. The $\mu$-PL condition also implies a quadratic growth~\citep{DBLP:conf/pkdd/KarimiNS16}, i.e., $F(\w) - F(\w^*) \geq \mu\|\w - \w^*\|^2/2$. Another inequality~\citep{DBLP:books/sp/Nesterov04} used in this paper is $F(\w) - F(\w^*) \geq \|\nabla F(\w)\|^2/(2L)$. Please note that these two inequalities do not need the convex assumption. We call $\kappa = L/\mu$ the conditional number of $F(\w)$ under PL condition.

\section{SEBS}
In this section, we present the details of SEBS for SGD, including the theory about the relationship between batch size and model initialization, SEBS algorithm, theoretical analysis about the training error and generalization error.

\subsection{Relationship between Batch Size and Model Initialization}\label{sec:relation}
We start from the vanilla SGD with a constant batch size and learning rate, which can be written as follows:
\begin{align}\label{equation:update of vanilla sgd}
	\w_{m+1} = \w_m - \eta \nabla f_{\mathcal{B}_m}(\w_m),
\end{align}
where $m=1,2,\ldots,M$, and $|\mathcal{B}_m| = b$. The computation complexity~(total number of gradient computation) is $C = Mb$. Let $\hat{\w}$ denote a value randomly sampled from $\{\w_2,\ldots,\w_{M+1}\}$.

We aim to find how large the batch size can be without loss of performance. First, we can obtain the following property about (\ref{equation:update of vanilla sgd}):
\begin{lemma}\label{lemma:vanilla sgd}
By setting $\eta \leq \alpha/(2L)$, we have
	\begin{align}\label{equation:vanilla sgd}
		\EB[F(\hat{\w}) - F(\w^*)] \leq \frac{\|\w_1 - \w^*\|^2}{\alpha M\eta} + \frac{\eta\sigma^2}{\alpha b}.
	\end{align}
\end{lemma}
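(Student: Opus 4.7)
The plan is to derive a one-step recursion on $\EB\|\w_{m+1}-\w^*\|^2$ and telescope. Expand the squared distance using the update rule $\w_{m+1}=\w_m-\eta\nabla f_{\mathcal{B}_m}(\w_m)$:
\begin{align*}
\|\w_{m+1}-\w^*\|^2 = \|\w_m-\w^*\|^2 - 2\eta\,\nabla f_{\mathcal{B}_m}(\w_m)^T(\w_m-\w^*) + \eta^2\|\nabla f_{\mathcal{B}_m}(\w_m)\|^2.
\end{align*}
Conditioning on $\w_m$ and taking expectation over $\mathcal{B}_m$ makes $\EB\nabla f_{\mathcal{B}_m}(\w_m)=\nabla F(\w_m)$, so the cross term is handled by Assumption~\ref{assumption:quasi convex}: $\nabla F(\w_m)^T(\w_m-\w^*)\ge \alpha(F(\w_m)-F(\w^*))$.

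Next I would bound the squared-norm term. Splitting into mean plus noise,
\begin{align*}
\EB\|\nabla f_{\mathcal{B}_m}(\w_m)\|^2 = \|\nabla F(\w_m)\|^2 + \EB\|\nabla f_{\mathcal{B}_m}(\w_m)-\nabla F(\w_m)\|^2 \le \|\nabla F(\w_m)\|^2 + \frac{\sigma^2}{b},
\end{align*}
where the noise bound follows from Assumption~\ref{assumption:bounded var} and independent sampling within the batch (dividing the per-sample variance by $b$). For the first piece I would apply the smoothness consequence mentioned in the Preliminaries, $\|\nabla F(\w_m)\|^2\le 2L(F(\w_m)-F(\w^*))$, which holds without convexity. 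Putting the pieces together yields
\begin{align*}
\EB\|\w_{m+1}-\w^*\|^2 \le \|\w_m-\w^*\|^2 - 2\eta(\alpha-\eta L)(F(\w_m)-F(\w^*)) + \frac{\eta^2\sigma^2}{b}.
\end{align*}

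The critical step is then the choice $\eta\le \alpha/(2L)$, which gives $\alpha-\eta L\ge \alpha/2$ and absorbs the awkward $\eta^2 L$ term into half the linear progress, leaving
\begin{align*}
\eta\alpha(F(\w_m)-F(\w^*)) \le \|\w_m-\w^*\|^2 - \EB\|\w_{m+1}-\w^*\|^2 + \frac{\eta^2\sigma^2}{b}.
\end{align*}
Taking total expectation, summing this telescoping inequality over $m=1,\ldots,M$, dropping the nonnegative $\EB\|\w_{M+1}-\w^*\|^2$, and dividing by $M\alpha\eta$ gives exactly the claimed bound on the uniform average $\EB[F(\hat{\w})-F(\w^*)]$ (any mild index shift between $\{\w_1,\ldots,\w_M\}$ and $\{\w_2,\ldots,\w_{M+1}\}$ is absorbed since the telescoping argument is symmetric in this direction).

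I do not expect any real obstacle here; the only subtle point is the tension between the $\eta\|\nabla F\|^2$ progress and the $\eta^2 L\|\nabla F\|^2$ smoothness penalty, which is precisely what dictates the stepsize constraint $\eta\le\alpha/(2L)$ in the hypothesis. Everything else is a routine assembly of Assumptions~\ref{assumption:bounded var}--\ref{assumption:quasi convex} and the standard descent lemma.
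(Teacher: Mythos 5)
Your argument is correct and follows essentially the same route as the paper, which proves the more general penalized version (Lemma~\ref{lemma:one stage error for pSGD}) via the prox-step optimality condition; for $r\equiv 0$ that reduces exactly to your expansion of $\|\w_{m+1}-\w^*\|^2$ with the same ingredients (unbiasedness, $\EB\|\g_m\|^2\le \sigma^2/b+\|\nabla F(\w_m)\|^2$, weak quasi-convexity, $\|\nabla F\|^2\le 2L(F-F^*)$, and $\eta\le\alpha/(2L)$ to absorb the smoothness penalty). The off-by-one between the averaged iterates $\{\w_1,\ldots,\w_M\}$ and the sampled set $\{\w_2,\ldots,\w_{M+1}\}$ that you flag is treated with the same level of informality in the paper's own proof, so there is no substantive discrepancy.
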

\begin{remark}
	Another common upper bound for $\EB[F(\hat{\w}) - F(\w^*)]$ is from~\citep{DBLP:conf/icml/Zinkevich03}:
	\begin{align*}
		\EB[F(\hat{\w}) - F(\w^*)] \leq \frac{\|\w_1 - \w^*\|^2}{2M\eta} + \frac{\eta G^2}{2},
	\end{align*}
	which uses the bounded gradient assumption $\EB_{\xi}\|\nabla f(\w;\xi)\|^2\leq G^2, \forall \w$. Comparing to Assumption~\ref{assumption:bounded var}, we can see that the bounded gradient assumption in~\citep{DBLP:conf/icml/Zinkevich03} omits the effect of batch size.
\end{remark}

Based on~(\ref{equation:vanilla sgd}), we can get a learning rate $\mathcal{O}(1/\sqrt{M})$ which minimizes the right term of~(\ref{equation:vanilla sgd}). In fact, (\ref{equation:vanilla sgd}) also implies a proper batch size. Using $C = Mb$, we rewrite the right term of~(\ref{equation:vanilla sgd}) as follows:
\begin{align*}
	\psi(\eta,b) = \frac{b\|\w_1 - \w^*\|^2}{\alpha C\eta} + \frac{\eta\sigma^2}{\alpha b}.
\end{align*}
Then, we have:  $\forall \eta>0, b>0$,
\begin{align*}
	\psi(\eta,b) \geq 2\|\w_1 - \w^*\|\sigma/(\alpha\sqrt{C}).
\end{align*}

To make $\psi(\eta,b)$ get the minimum, the corresponding batch size $b^*$ and learning rate $\eta^*$ should satisfy:
\begin{align}\label{equation:basic relation}
	\eta^* = \frac{\|\w_1 - \w^*\|b^*}{\sigma\sqrt{C}} \leq \frac{\alpha}{2L},
\end{align}
where $\eta^*\leq \alpha/(2L)$ is from Lemma~\ref{lemma:vanilla sgd}.

From~(\ref{equation:basic relation}), we can find that given a fixed computation complexity $C$, a proper batch size is related to the gap between the initialization and optimum of the model parameter. More specifically, the smaller the gap between the initialization and optimum of the model parameter is, the larger the batch size can be.

The theory of this subsection provides theoretical foundation for designing the SEBS algorithm in the following subsection.

\subsection{SEBS Algorithm}
In classical stagewise SGD~\citep{DBLP:conf/nips/KrizhevskySH12,DBLP:conf/cvpr/HeZRS16}, we can see that at each stage it actually runs the vanilla SGD with a constant batch size and learning rate. After each stage, it decreases the learning rate geometrically. In~\citep{DBLP:conf/nips/NIPS2019_8529}, both theoretical and empirical results show that after each stage there is a geometric decrease in the training loss. This means that the gap between the current model parameter and the optimal solution~(optimum) $\w^*$ is smaller than that of previous stages. Based on the theory about the relationship between the batch size and model initialization from Section~\ref{sec:relation}, we can actually enlarge the batch size in the next stage. Inspired by this, we propose our algorithm called stagewise enlargement of batch size~(SEBS) for SGD-based learning.

SEBS adopts a multi-stage scheme, and enlarges the batch size geometrically by stage. The detail of \mbox{SEBS} is presented in Algorithm~\ref{alg:mb_sgd}. We can find that SEBS divides the whole learning procedure into $S$ stages. At the $s$-th stage, SEBS runs the penalty SGD in Algorithm~\ref{alg:psgd}, denoted as $\mbox{\textit{pSGD}}(f, \mathcal{I}, \gamma, \tilde{\w}_s, \eta, b_s, C_s)$. Here, $f$ denotes the loss function in (\ref{equation:erm}), $\mathcal{I}$ denotes the training set, $\gamma$ is the coefficient of a quadratic penalty, $\tilde{\w}_s$ is the initialization of the model parameter at the $s$-th stage, $b_s$ is the batch size at the $s$-th stage, $\eta$ is a constant learning rate, and $C_s$ is the computation complexity at the $s$-th stage. The output of $\mbox{\textit{pSGD}}$, denoted as $\tilde{\w}_{s+1}$, will be used as the model parameter initialization for the next stage.

The penalty SGD is a variant of vanilla SGD. Compared to the vanilla SGD, there is an additional quadratic penalty $r(\w) = \frac{1}{2\gamma}\|\w - \tilde{\w}\|^2$ in penalty SGD. If $\gamma = \infty$, penalty SGD degenerates to the vanilla SGD. The quadratic penalty has been widely used in many recent variants of SGD~\citep{DBLP:conf/nips/Allen-Zhu18,DBLP:conf/icml/YuJ19,DBLP:conf/iclr/ChenYYZCY19,DBLP:conf/icml/ChenXHY19,DBLP:conf/nips/NIPS2019_8529}. Although it may slow down the convergence rate, it can improve the generalization ability.

\begin{algorithm}[t]
\caption{SEBS}\label{alg:mb_sgd}
\begin{algorithmic}
\STATE Initialization: $\tilde{\w}_1, \eta, b_1, C_1, \gamma>0, \rho>1$.
\FOR{$s=1,2,\ldots,S$}
\STATE $\tilde{\w}_{s+1} = \mbox{\textit{pSGD}}(f, \mathcal{I}, \gamma, \tilde{\w}_s, \eta, b_s, C_s)$;
\STATE $b_{s+1} = \rho b_s, ~C_{s+1}=\rho C_s$;
\ENDFOR
\STATE Return $\tilde{\w}_{S+1}$.
\end{algorithmic}
\end{algorithm}
\begin{algorithm}[t]
\caption{$\mbox{\textit{pSGD}}(f,\mathcal{I}, \gamma, \tilde{\w}, \eta, b, C)$}\label{alg:psgd}
\begin{algorithmic}
\STATE Initialization: $\w_1 = \tilde{\w}, M = C/b$;
\STATE Let $r(\w) = \frac{1}{2\gamma}\|\w - \tilde{\w}\|^2$;
\FOR{$m=1,2,\ldots,M$}
\STATE Randomly select $\mathcal{B}_m \subseteq [n]$ and $|\mathcal{B}_m| = b;$
\STATE Calculate gradient $\g_m = \nabla f_{\mathcal{B}_m}(\w_m)$;
\STATE $\w_{m+1} = \underset{\w}{\arg\min}~\g_m^T\w + \frac{1}{2\eta}\|\w - \w_m\|^2 + r(\w)$.
\ENDFOR
\STATE Return $\w_\tau$ which is randomly sampled from $\{\w_m\}_{m=2}^{M+1}$.
\end{algorithmic}
\end{algorithm}

\subsection{Theoretical Analysis about Training Error}
First, we have the following one-stage training error for SEBS:
\begin{lemma}\label{lemma:one stage error for pSGD}
	(One-stage training error for SEBS) \\ Let $\{\w_m\}$ be the sequence produced by $\mbox{\textit{pSGD}}(f,\mathcal{I},$ $\gamma, \tilde{\w}, \eta, b, C)$, where $\eta \leq \alpha/(2L)$. Then we have:
	\begin{align}
	    & \EB[F(\w_\tau) - F(\w^*)] \nonumber \\
	    \leq & (\frac{1}{\alpha M\eta} + \frac{1}{\alpha\gamma})\|\tilde{\w} - \w_*\|^2 + \frac{\sigma^2\eta}{\alpha b},\label{equation:lemma T}
	\end{align}
	where $\w_\tau$ is the output of $\mbox{\textit{pSGD}}$ and $M = C/b$.
\end{lemma}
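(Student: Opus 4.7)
The plan is to treat $\mbox{\textit{pSGD}}$ as a proximal stochastic gradient method whose subproblem $h_m(\w)=\g_m^T\w+\frac{1}{2\eta}\|\w-\w_m\|^2+r(\w)$ is $(1/\eta+1/\gamma)$-strongly convex. Since $\w_{m+1}$ is its exact minimizer, evaluating the strong-convexity inequality at $\w^*$ gives a three-point inequality
\begin{align*}
\g_m^T(\w_{m+1}-\w^*) + \frac{1}{2\eta}\|\w_{m+1}-\w_m\|^2 + \frac{1}{2}\left(\frac{1}{\eta}+\frac{1}{\gamma}\right)\|\w_{m+1}-\w^*\|^2 \leq \frac{1}{2\eta}\|\w_m-\w^*\|^2 + r(\w^*) - r(\w_{m+1}).
\end{align*}
This will serve as the per-iteration master inequality, and the goal is to convert the $\g_m^T(\w_{m+1}-\w^*)$ term into something involving $F(\w_m)-F(\w^*)$ by removing the dependence of $\w_{m+1}$ on the stochastic direction $\g_m$.

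To do so, I would split $\g_m^T(\w_{m+1}-\w^*)=\g_m^T(\w_m-\w^*)+\g_m^T(\w_{m+1}-\w_m)$ and apply the completing-the-square bound
$\g_m^T(\w_{m+1}-\w_m)+\frac{1}{2\eta}\|\w_{m+1}-\w_m\|^2 \geq -\frac{\eta}{2}\|\g_m\|^2$,
which exactly cancels the $\|\w_{m+1}-\w_m\|^2$ term on the left. Then I would take conditional expectation over $\mathcal{B}_m$ given $\w_m$. Assumption~\ref{assumption:quasi convex} yields $\EB[\g_m^T(\w_m-\w^*)\mid\w_m]=\nabla F(\w_m)^T(\w_m-\w^*)\geq\alpha(F(\w_m)-F(\w^*))$; Assumption~\ref{assumption:bounded var} together with the smoothness inequality $\|\nabla F(\w_m)\|^2\leq 2L(F(\w_m)-F(\w^*))$ recalled in the Preliminaries bounds $\EB[\|\g_m\|^2\mid\w_m]\leq 2L(F(\w_m)-F(\w^*))+\sigma^2/b$. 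Imposing $\eta\leq\alpha/(2L)$ absorbs the $\eta L$ term and leaves $\alpha/2$ in front of $F(\w_m)-F(\w^*)$, producing
\begin{align*}
\frac{\alpha}{2}\EB[F(\w_m)-F(\w^*)] + \frac{1}{2\eta}\EB\|\w_{m+1}-\w^*\|^2 + \frac{1}{2\gamma}\EB\|\w_{m+1}-\w^*\|^2 \leq \frac{1}{2\eta}\EB\|\w_m-\w^*\|^2 + \frac{\eta\sigma^2}{2b} + \EB[r(\w^*)-r(\w_{m+1})].
\end{align*}

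Finally, I would sum $m=1,\dots,M$. The $1/(2\eta)$ distance terms telescope and are bounded by $\frac{1}{2\eta}\|\w_1-\w^*\|^2=\frac{1}{2\eta}\|\tilde\w-\w^*\|^2$; the extra nonnegative $\frac{1}{2\gamma}\EB\|\w_{m+1}-\w^*\|^2$ terms on the left can simply be dropped; the noise term contributes $M\eta\sigma^2/(2b)$; and the penalty contribution is handled by $r(\w_{m+1})\geq 0$, so $\sum_{m}(r(\w^*)-r(\w_{m+1}))\leq Mr(\w^*)=\frac{M}{2\gamma}\|\tilde\w-\w^*\|^2$. Dividing by $\alpha M/2$ and using that $\w_\tau$ is sampled uniformly from the iterates gives exactly the claimed bound (\ref{equation:lemma T}), with the $\frac{1}{\alpha\gamma}\|\tilde\w-\w^*\|^2$ term originating from the crude bound on the penalty sum. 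I expect the main subtlety to be precisely this penalty bookkeeping: a tighter telescoping would only salvage $r(\w^*)-\EB r(\w_{M+1})$, not $Mr(\w^*)$, so it is essential to keep the $+\frac{1}{2\gamma}\|\w_{m+1}-\w^*\|^2$ on the left (rather than on the right, which would cancel) so that the penalty bias is paid only once per iteration; a small off-by-one in indexing between $\{\w_m\}_{m=1}^M$ and the ensemble $\{\w_m\}_{m=2}^{M+1}$ for $\w_\tau$ is absorbed harmlessly.
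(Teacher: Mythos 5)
Your proposal is correct and follows essentially the same route as the paper's proof: a per-iteration prox inequality (you derive it via strong convexity of the subproblem, the paper via the first-order optimality condition plus convexity of $r$), the same completing-the-square bound yielding $\frac{\eta}{2}\|\g_m\|^2$, weak quasi-convexity plus $\|\nabla F(\w_m)\|^2\le 2L(F(\w_m)-F(\w^*))$ and the $\sigma^2/b$ variance bound under expectation, the condition $\eta\le\alpha/(2L)$ to keep an $\alpha/2$ factor, and telescoping with the penalty $r(\w^*)$ paid once per iteration to produce the $\frac{1}{\alpha\gamma}\|\tilde{\w}-\w^*\|^2$ term. The bookkeeping differences (where you keep the $\frac{1}{2\gamma}$ terms, and the off-by-one in the sampled iterate set) are immaterial and shared with the paper.
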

We can find that the one-stage training error for SEBS is similar to that in~(\ref{equation:vanilla sgd}). Hence, we can set the batch size of each stage in SEBS according to the gap $\|\tilde{\w} - \w_*\|$. Particularly, we can get the following convergence result:
\begin{theorem}\label{theorem:mb_sgd}
	Let $F(\tilde{\w}_1) - F(\w^*) \leq \epsilon_1$ and $\{\tilde{\w}_s\}$ be the sequence produced by
	$$\tilde{\w}_{s+1} = \mbox{\textit{pSGD}}(f, \mathcal{I}, \gamma, \tilde{\w}_s, \eta_s, b_s, C_s),$$ where $C_s = \theta/\epsilon_s$, and
	\begin{align}\label{equation:key relation}
		\eta_s = \frac{\sqrt{2}b_s\epsilon_s}{\sigma\sqrt{\mu\theta}} \leq \frac{\alpha}{2L}.
	\end{align}
	Then we obtain $\EB[F(\tilde{\w}_{s}) - F(\w^*)] \leq \epsilon_s, \forall s\geq 1$. If $S =\log_\rho(\epsilon_1/\epsilon)$, then $\EB[F(\tilde{\w}_{S+1}) - F(\w^*)] \leq \epsilon$. Here, $\frac{1}{\gamma} \leq \frac{\alpha\mu}{4\rho}$, $\theta= 32\sigma^2\rho^2/(\alpha^2\mu)$ and $\epsilon_{s+1} = \epsilon_s/\rho, s\geq 1, \rho>1$.
\end{theorem}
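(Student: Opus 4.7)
The plan is to prove the result by induction on the stage index $s$, using Lemma~\ref{lemma:one stage error for pSGD} as the single-stage workhorse. The base case $s=1$ is immediate from the hypothesis $F(\tilde{\w}_1)-F(\w^*)\leq \epsilon_1$. For the inductive step, assume $\EB[F(\tilde{\w}_s)-F(\w^*)]\leq \epsilon_s$. Since $\eta_s\leq \alpha/(2L)$ holds by hypothesis, I can apply Lemma~\ref{lemma:one stage error for pSGD} conditioned on $\tilde{\w}_s$ with $M_s=C_s/b_s$, and then take outer expectation by the tower property to get
\begin{align*}
\EB[F(\tilde{\w}_{s+1})-F(\w^*)] \leq \Bigl(\frac{b_s}{\alpha C_s\eta_s}+\frac{1}{\alpha\gamma}\Bigr)\EB\|\tilde{\w}_s-\w^*\|^2+\frac{\sigma^2\eta_s}{\alpha b_s}.
\end{align*}

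The next step is to convert the $\|\tilde{\w}_s-\w^*\|^2$ term into a function-value gap, using the quadratic-growth consequence of the $\mu$-PL condition noted after Assumption~\ref{assumption:pl condition}, namely $\|\tilde{\w}_s-\w^*\|^2\leq (2/\mu)(F(\tilde{\w}_s)-F(\w^*))$. Combined with the inductive hypothesis this yields $\EB\|\tilde{\w}_s-\w^*\|^2\leq 2\epsilon_s/\mu$. Substituting the prescribed $C_s=\theta/\epsilon_s$ and $\eta_s=\sqrt{2}b_s\epsilon_s/(\sigma\sqrt{\mu\theta})$ into the right-hand side, the $b_s$ factors cancel and both the first coefficient and the noise term reduce to explicit multiples of $\epsilon_s$ involving only $\sigma,\mu,\alpha,\theta,\gamma$. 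Using $\theta=32\sigma^2\rho^2/(\alpha^2\mu)$ one obtains $\sqrt{\mu\theta}=4\sqrt{2}\sigma\rho/\alpha$, which makes the contribution from the variance term and from the first piece of the optimization term each collapse to $\epsilon_s/(2\rho)$; and using $1/\gamma\leq \alpha\mu/(4\rho)$ makes the penalty contribution $2\epsilon_s/(\alpha\gamma\mu)$ bounded by $\epsilon_s/(2\rho)$ as well. Summing gives exactly $\epsilon_s/\rho=\epsilon_{s+1}$, closing the induction.

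Finally, since $\epsilon_{s+1}=\epsilon_s/\rho$, after $S$ stages we have $\epsilon_{S+1}=\epsilon_1/\rho^S$, so choosing $S=\log_\rho(\epsilon_1/\epsilon)$ yields $\EB[F(\tilde{\w}_{S+1})-F(\w^*)]\leq \epsilon$, completing the proof. No step is really an obstacle; the only subtlety is bookkeeping. First, one must be careful with the nested randomness: Lemma~\ref{lemma:one stage error for pSGD} is applied conditionally and the outer expectation is taken only after bounding via quadratic growth, which is valid because the bounding coefficients $b_s/(\alpha C_s\eta_s)$ and $1/(\alpha\gamma)$ are deterministic. Second, the algebraic verification that the constants $\theta=32\sigma^2\rho^2/(\alpha^2\mu)$ and $1/\gamma\leq \alpha\mu/(4\rho)$ are precisely tuned so that each of the three resulting terms is at most $\epsilon_s/(2\rho)$ is the only calculation worth doing carefully, and it is what forces the specific numerical choices given in the theorem statement.
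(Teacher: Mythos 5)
Your proposal follows essentially the same route as the paper: induction on the stage, the one-stage bound of Lemma~\ref{lemma:one stage error for pSGD} applied conditionally, the quadratic-growth consequence of the $\mu$-PL condition to turn $\|\tilde{\w}_s-\w^*\|^2$ into $2\epsilon_s/\mu$, and then substitution of $C_s$, $\eta_s$, $\theta$, $\gamma$ to obtain a per-stage contraction by $\rho$. The only issue is in your constant bookkeeping at the end: with $\sqrt{\mu\theta}=4\sqrt{2}\sigma\rho/\alpha$, the optimization piece $\frac{2b_s\epsilon_s}{\mu\alpha C_s\eta_s}$ and the variance piece $\frac{\sigma^2\eta_s}{\alpha b_s}$ each equal $\frac{\sqrt{2}\sigma\epsilon_s}{\alpha\sqrt{\mu\theta}}=\frac{\epsilon_s}{4\rho}$, not $\frac{\epsilon_s}{2\rho}$ as you state; only the penalty term is bounded by $\frac{\epsilon_s}{2\rho}$ via $1/\gamma\leq\alpha\mu/(4\rho)$. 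If all three terms were merely $\leq\epsilon_s/(2\rho)$ the sum would be $3\epsilon_s/(2\rho)$ and the induction would not close, so the correct accounting ($\frac{\epsilon_s}{4\rho}+\frac{\epsilon_s}{4\rho}+\frac{\epsilon_s}{2\rho}=\frac{\epsilon_s}{\rho}$, which is how the paper groups the first two terms as $\frac{2\sqrt{2}\sigma\epsilon_s}{\alpha\sqrt{\mu\theta}}\leq\frac{\epsilon_s}{2\rho}$) is needed; with that fix your argument is exactly the paper's proof.
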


In SEBS, if we set $\eta_s = \eta = \alpha/(2L)$ which is a constant, and set the batch size as
\begin{align}\label{equation:batch size}
	b_s = \frac{\sigma\eta\sqrt{\mu\theta}}{\sqrt{2}\epsilon_s} = \frac{\alpha\sigma\sqrt{\mu\theta}}{2\sqrt{2}L\epsilon_s} = \mathcal{O}(\frac{1}{F(\tilde{\w}_s) - F(\w^*)}),
\end{align}
which means $b_{s+1} = \rho b_s$, according to Theorem~\ref{theorem:mb_sgd}, we can obtain the computation complexity of SEBS:
\begin{align*}
	\sum_{s=1}^S C_s = \sum_{s=1}^S \frac{\theta}{\epsilon_s} \leq \mathcal{O}(\frac{\sigma^2}{\alpha^2\mu\epsilon}).
\end{align*}
This result is consistent with that in~\citep{DBLP:conf/nips/NIPS2019_8529} which sets $\rho = 2, b_s = 1, \eta_{s+1} = \eta_s/2$. Hence, by setting the batch size $b_s$ according to~(\ref{equation:batch size}), SEBS achieves the same performance as classical stagewise SGD on computation complexity. Please note that when the loss function $F(\w)$ is strongly convex, which means $\alpha\geq 1$, the proved computation complexity above is optimal~\citep{DBLP:conf/icml/RakhlinSS12}.

The iteration complexity of SEBS is as follows:
\begin{align*}
	\sum_{s=1}^S \frac{C_s}{b_s} = \sum_{s=1}^S \mathcal{O}(\frac{L\sqrt{\theta}}{\alpha\sigma\sqrt{\mu}}) = \mathcal{O}(\frac{L}{\alpha^2\mu}\log(\frac{1}{\epsilon})).
\end{align*}

Then we can get the following conclusions:
\begin{itemize}
	\item Compared to classical stagewise SGD which decreases learning rate by stage and adopts a constant batch size, SEBS reduces the iteration complexity from $\mathcal{O}(\frac{G^2}{\alpha^2\mu\epsilon})$ to $\mathcal{O}(\frac{1}{\alpha^2\mu}\log(\frac{1}{\epsilon}))$, where $G$ is the upper bound for $\|\nabla f(\w;\xi)\|$;
	\item We can also observe that the iteration complexity of SEBS is independent of the variance $\sigma^2$, and hence is independent of the dimension $d$;
	\item According to (\ref{equation:key relation}) in Theorem~\ref{theorem:mb_sgd}, in order to get the convergence result, we need to keep the relation between the batch size and learning rate in each stage as follows:
	      \begin{align*}
	      	  \frac{\eta_s}{b_s} = \mathcal{O}(\epsilon_s).
	      \end{align*}
	     This relation implies that the following two strategies for adjusting batch size and learning rate:
	     \begin{align}\label{strategy:a}
	     	\mbox{\textit{constant batch size $\&$ decrease learning rate}} \tag{a}
	     \end{align}
	     and
	     \begin{align}\label{strategy:b}
	     	\mbox{\textit{constant learning rate $\&$ enlarge batch size}} \tag{b}
	     \end{align}
	     are equivalent in terms of training error. Both of them will not affect the computation complexity. Please note that strategy~(a) has been widely adopted in classical stagewise training methods, and strategy~(b) is proposed in SEBS.
\end{itemize}

\subsection{Theoretical Analysis about Generalization Error}
In this section, we will analyze the generalization error of SEBS. The main tool we used for the generalization error is the uniform stability~\citep{DBLP:conf/icml/HardtRS16}, which is defined as follows:
\begin{definition}
	A randomized algorithm $\mathcal{A}$ is $\epsilon$-uniformly stable if for all data sets $\mathcal{I}_1$, $\mathcal{I}_2$ such that $\mathcal{I}_1$ and $\mathcal{I}_2$ differ in at most one instance, we have
	\begin{align*}
		\epsilon_{stab} \triangleq \sup_{\xi} \EB_{\mathcal{A}}[f(\tilde{\w}_1;\xi) - f(\tilde{\w}_2;\xi)] \leq \epsilon,
	\end{align*}
	where $\tilde{\w}_i$ is the output of $\mathcal{A}$ on data set $\mathcal{I}_i$, $i=1,2$.
\end{definition}
It has been proved~\citep{DBLP:conf/icml/HardtRS16} that if $\mathcal{A}$ is $\epsilon$-uniformly stable, then
\begin{align*}
	|\EB_{\mathcal{I},\mathcal{A}}[F(\tilde{\w}) - \EB_{\xi\sim \DB}[f(\tilde{\w};\xi)]]| \leq \epsilon,
\end{align*}
where $\tilde{\w}$ is the output of $\mathcal{A}$ on data set $\mathcal{I}$. Hence, in the following content, we consider the two data sets $\mathcal{I}_1 = \{\xi_1,\ldots,\xi_{i_0},\ldots,\xi_n\}$ and $\mathcal{I}_2 = \{\xi_1,\ldots,\xi_{i_0}',\ldots,\xi_n\}$ differing in only a single instance which is indexed by $i_0$. Let $\tilde{\w}_i$ be the output of SEBS on data set $\mathcal{I}_i$, $\{\w_{i,m}\}$ be the sequences produced by SEBS at the last stage, $\{\mathcal{B}_{i,m}\}$ be the corresponding randomly selected mini-batch of instances, $i=1,2$. We omit the subscript $s$ and use $\eta, b, C$ to denote the learning rate, batch size and computation complexity in the last stage.
We also define $\delta_m = \|\w_{1,m} - \w_{2,m}\|$. Following~\citep{DBLP:conf/icml/HardtRS16}, we assume that $\|\nabla f(\w;\xi)\| \leq G, 0\leq f(\w;\xi) \leq 1, \forall \w, \xi\sim \DB$. Then we have the following property about $\delta_m$.
\begin{lemma}\label{lemma:delta_m}
	For one specific $m$, if $\mathcal{B}_{1,m} = \mathcal{B}_{2,m} \triangleq \mathcal{B}_{m}$, then we get
	\begin{align*}
		\delta_{m+1} \leq \frac{\eta}{\gamma+\eta}\delta_1 + \frac{\gamma(1+L\eta)}{\gamma + \eta}\delta_m.
	\end{align*}
	If $\mathcal{B}_{1,m} \neq \mathcal{B}_{2,m}$, then we get
	\begin{align*}
		\delta_{m+1} \leq \frac{\eta}{\gamma + \eta}\delta_1 + \frac{b\gamma + (b-1)L\gamma\eta}{b(\gamma + \eta)}\delta_m + \frac{2\gamma\eta G}{b(\gamma + \eta)}.
	\end{align*}
\end{lemma}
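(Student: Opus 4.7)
The plan is to write the proximal update in Algorithm~\ref{alg:psgd} in closed form, derive a single unified recursion for $\delta_{m+1}$, and then split into the two cases according to how the stochastic gradient difference is controlled.

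First I would solve the inner minimization explicitly. Setting the gradient of $\g_m^T\w + \tfrac{1}{2\eta}\|\w-\w_m\|^2 + \tfrac{1}{2\gamma}\|\w-\tilde{\w}\|^2$ to zero yields the closed form
$$\w_{m+1} = \tfrac{\gamma}{\gamma+\eta}\w_m + \tfrac{\eta}{\gamma+\eta}\tilde{\w} - \tfrac{\eta\gamma}{\gamma+\eta}\g_m.$$
Applying this identity to both datasets and subtracting, and using $\tilde{\w}_i = \w_{i,1}$ so that $\|\tilde{\w}_1-\tilde{\w}_2\| = \delta_1$, the triangle inequality produces the unified recursion
$$\delta_{m+1} \leq \tfrac{\gamma}{\gamma+\eta}\delta_m + \tfrac{\eta}{\gamma+\eta}\delta_1 + \tfrac{\eta\gamma}{\gamma+\eta}\|\g_{1,m}-\g_{2,m}\|.$$

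For the case $\mathcal{B}_{1,m}=\mathcal{B}_{2,m}$, I apply Assumption~\ref{assumption:smooth} termwise to each $\xi$ in the common mini-batch and average to get $\|\g_{1,m}-\g_{2,m}\|\leq L\delta_m$; substituting and collecting the coefficient of $\delta_m$ into $\tfrac{\gamma(1+L\eta)}{\gamma+\eta}$ recovers the first inequality. For the case $\mathcal{B}_{1,m}\neq\mathcal{B}_{2,m}$, I couple the two runs so that the same index set is drawn, which forces the batches to differ only in the single slot occupied by $i_0$. The $b-1$ matching indices contribute $\tfrac{(b-1)L}{b}\delta_m$ by $L$-smoothness, while the one differing slot is bounded by the triangle inequality together with $\|\nabla f(\w;\xi)\|\leq G$, giving $\tfrac{2G}{b}$. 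Plugging both pieces into the unified recursion and combining like terms yields the second inequality.

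The main obstacle is simply the bookkeeping around the penalty anchor $\tilde{\w}$: it is itself dataset-dependent (equal to $\w_{i,1}$ for dataset $\mathcal{I}_i$), so the $\delta_1$ term persists in the recursion rather than canceling, which is exactly why the final bound is a coupled recursion in both $\delta_m$ and $\delta_1$ instead of a pure contraction. Beyond this careful coupling, the argument is a routine combination of the closed-form proximal step, $L$-smoothness, and the triangle inequality.
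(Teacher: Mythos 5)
Your proposal is correct and follows essentially the same route as the paper's proof: both solve the proximal step in closed form $\w_{m+1} = \frac{\gamma\w_m + \eta\tilde{\w} - \gamma\eta\g_m}{\gamma+\eta}$, apply the triangle inequality (keeping the $\delta_1$ term from the dataset-dependent anchor $\tilde{\w}_i = \w_{i,1}$), and bound the gradient difference by $L\delta_m$ when the batches coincide and by $\frac{(b-1)L}{b}\delta_m + \frac{2G}{b}$ when the coupled batches differ in the single instance $i_0$. The resulting coefficients match the lemma exactly, so no gap remains.
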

Using the recursive relation of $\delta_m$, we get the following uniform stability of SEBS:
\begin{theorem}\label{theorem:stability}
	With the $\eta_s, b_s, C_s$ defined in Theorem~\ref{theorem:mb_sgd}, we obtain
	\begin{align*}
		\epsilon_{stab} \leq \frac{C}{n} + \frac{(1 + 1/q)}{n}(\frac{4\gamma G^2}{(\gamma+\eta)\mu\alpha})^{\frac{1}{1+q}}C^{\frac{q}{q+1}},
	\end{align*}
	where $q = \frac{2L}{\mu\alpha}$.
\end{theorem}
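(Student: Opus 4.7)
The plan is to combine Lemma~\ref{lemma:delta_m} with the Hardt-Recht-Singer first-hit coupling argument for uniform stability, specialized to penalty SGD. Couple the two runs on $\mathcal{I}_1, \mathcal{I}_2$ so that they share a mini-batch whenever the distinguishing index $i_0$ is not drawn; under this coupling the ``different-batch'' case has probability $b/n$, and taking expectations in Lemma~\ref{lemma:delta_m} yields the one-step inequality
\begin{align*}
\EB[\delta_{m+1}] \leq \frac{\eta}{\gamma+\eta}\,\delta_1 + \beta\,\EB[\delta_m] + \frac{2\gamma\eta G}{n(\gamma+\eta)},
\end{align*}
with effective contraction factor $\beta \leq \gamma(1+L\eta)/(\gamma+\eta)$.

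Next, introduce the first-hit time $\tau$ in the last stage (the first $m$ at which $i_0$ is drawn) and split as
\begin{align*}
\epsilon_{\mathrm{stab}} &\leq \PB(\tau \leq t_0)\cdot\sup_{\w,\xi}|f(\w;\xi)| + G\cdot\EB\left[\delta_M\,\mathbf{1}(\tau>t_0)\right] \\
&\leq \frac{t_0 b}{n} + G\cdot\EB[\delta_M \mid \tau>t_0],
\end{align*}
using $|f|\leq 1$ for the ``bad'' event and $G$-Lipschitzness for the ``good'' event. On $\{\tau>t_0\}$ the first $t_0$ iterations of the last stage share batches, so the inherited gap $\delta_1$ (controllable via $\|\tilde\w_s-\w^*\|^2 \leq 2(F(\tilde\w_s)-F(\w^*))/\mu$ and the per-stage convergence of Theorem~\ref{theorem:mb_sgd}) is the only driver of growth up to $t_0$; iterating the full recursion from $t_0+1$ to $M$ then gives
\begin{align*}
G\cdot\EB[\delta_M \mid \tau>t_0] \leq \frac{2\gamma\eta G^2}{n(\gamma+\eta)}\cdot\frac{\beta^{M-t_0}-1}{\beta-1} \leq \frac{2\gamma G^2\,\beta^{M-t_0}}{nL(\gamma+\eta)},
\end{align*}
after using $\beta-1 \leq L\eta$ for the final simplification.

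Finally, optimize $t_0$ to balance the two pieces. With $\beta \leq 1+L\eta$, the step size $\eta = \alpha/(2L)$, and the batch-and-computation scaling from Theorem~\ref{theorem:mb_sgd}, the ratio of growth per step to the linear-in-$t_0$ cost is governed by $L\eta = \alpha/2$, and balancing $t_0 b/n$ against $\beta^{M-t_0}/n$ (with $M=C/b$) produces the exponent $q = 2L/(\mu\alpha)$ and the advertised power $C^{q/(1+q)}$; the multiplicative constant $\bigl(4\gamma G^2/((\gamma+\eta)\mu\alpha)\bigr)^{1/(1+q)}$ and the prefactor $(1+1/q)$ emerge from the geometric-series evaluation, these substitutions, and the identity $1/(\mu\alpha) = q/(2L)$. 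The main obstacle is the $t_0$-optimization itself: since $\beta>1$ in the operative regime ($\gamma L > 1$ under the $\gamma$ chosen in Theorem~\ref{theorem:mb_sgd}), the factor $\beta^{M-t_0}$ grows exponentially in $M-t_0$, so $t_0$ must be chosen close to $M$ to obtain a polynomial $C^{q/(1+q)}$ bound rather than an exponential blow-up; extracting the exact prefactor also requires carefully handling the drift term $(\eta/(\gamma+\eta))\,\delta_1$ and inductively controlling the inherited gap $\delta_1$ via the stage-by-stage convergence guaranteed by Theorem~\ref{theorem:mb_sgd}.
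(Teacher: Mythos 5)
Your overall frame (HRS first-hit coupling plus Lemma~\ref{lemma:delta_m}, then optimizing a split point) matches the paper, but the step that actually produces the theorem's form is missing, and you correctly sensed this when you called the $t_0$-optimization "the main obstacle" — it is not resolved by taking $t_0$ close to $M$. If you iterate the recursion with the \emph{constant} factor $\beta \leq \gamma(1+L\eta)/(\gamma+\eta) \leq 1+L\eta$, you get $\beta^{M-t_0}$, and balancing $bt_0/n$ against an exponential in $M-t_0$ yields a bound of the form $\frac{C}{n} + O\bigl(\frac{b}{nL\eta}\log(\cdot)\bigr)$; the exponent $q=2L/(\mu\alpha)$, the power $C^{q/(q+1)}$, and the constant $\bigl(\frac{4\gamma G^2}{(\gamma+\eta)\mu\alpha}\bigr)^{1/(1+q)}$ cannot "emerge" from that computation, because $q$ never enters your recursion at all. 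The paper's key move is to convert the constant step size into an effective $O(1/m)$ step size: under the parameters of Theorem~\ref{theorem:mb_sgd} one has $\eta_s M_s = O\bigl(\frac{1}{\mu\alpha}\bigr)$ per stage, so for every $m\leq M$ in the last stage $L\eta \leq \frac{2L}{\mu\alpha m}$ (up to the constant absorbed), and the one-step bound becomes $\EB[\delta_{m+1}\mid\mathcal{E}] \leq (1+\frac{2L}{\mu\alpha m})\EB[\delta_m\mid\mathcal{E}] + \frac{4\gamma G}{\mu\alpha(\gamma+\eta)nm}$. Telescoping this HRS-type $(1+q/m)$ recursion gives the \emph{polynomial} growth $\EB[\delta_M\mid\mathcal{E}] \lesssim \frac{\gamma}{\gamma+\eta}\frac{2G}{Ln}\bigl(\frac{M}{m_0}\bigr)^{q}$, and only then does balancing $\frac{bm_0}{n}$ against $\frac{G}{n}(M/m_0)^q$ at $m_0 = \bigl(\frac{4\gamma G^2}{(\gamma+\eta)b\mu\alpha}\bigr)^{1/(q+1)}M^{q/(q+1)}$ produce $(bM)^{q/(q+1)} = C^{q/(q+1)}$ with prefactor $(1+1/q)$.

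A second, smaller but real issue is your treatment of the inherited gap $\delta_1$. The paper defines the bad event over the \emph{entire} run (all stages), so its probability is charged as $\sum_{s=1}^{S-1}\frac{C_s}{n} + \frac{bm_0}{n} \leq \frac{C}{n} + \frac{bm_0}{n}$, and on the good event the two trajectories coincide at the start of the last stage, i.e.\ $\delta_1 = 0$, which kills the drift term $\frac{\eta}{\gamma+\eta}\delta_1$ exactly. Your plan to control $\delta_1$ via quadratic growth and the per-stage convergence of Theorem~\ref{theorem:mb_sgd} only gives $\EB[\delta_1] = O(\sqrt{\epsilon_S/\mu})$, a quantity independent of $n$; since this drift is added at every one of the $M$ iterations, it would leave an $n$-independent contribution in $\epsilon_{stab}$ and destroy the $O(1/n)$ scaling (hence the $O(1/\sqrt{n})$ generalization claim). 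So both the conditioning and the $1/m$ effective-step-size substitution need to be repaired before your outline yields the stated bound.
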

According to Theorem~\ref{theorem:stability}, we obtain the following two conclusions:
\begin{itemize}
	\item This uniform stability is consistent with~\citep{DBLP:conf/nips/NIPS2019_8529}. The stability error only depends on the computation complexity and has nothing to do with the batch size of each stage.
	\item Compared to the classical non-penalty SGD in~(\ref{equation:sgd}) which actually corresponds to the penalty SGD with $\gamma = \infty$ and $\gamma/(\eta + \gamma) \approx 1$, penalty SGD with a finite $\gamma$ can improve the stability, and hence improve the generalization error.
\end{itemize}
Since $C = \mathcal{O}(1/(\alpha^2\mu\epsilon))$ and $(\frac{4\gamma G^2}{(\gamma+\eta)\mu\alpha})^{\frac{1}{1+q}} \leq e^{2G^2/(2L)}$,
by setting $\epsilon = \mathcal{O}(1/\sqrt{n})$, we obtain a generalization error $\epsilon + \epsilon_{stab} = \mathcal{O}(1/\sqrt{n})$ for SEBS.

\section{SEBS for Momentum SGD and AdaGrad}
Momentum SGD~(mSGD)~\citep{Polyak,DBLP:journals/siamjo/Tseng98,DBLP:journals/siamjo/GhadimiL13,DBLP:journals/mp/GhadimiL16} and adaptive gradient descent~(AdaGrad)~\citep{DBLP:conf/colt/McMahanS10,DBLP:conf/colt/DuchiHS10} have been two of the most important and popular variants of SGD. In the following content, we will show that SEBS is also suitable for momentum SGD and \mbox{AdaGrad}. To the best of our knowledge, existing research on AdaGrad only analyzes the convergence property with $b=1$. This is the first work that analyzes the effect of batch size on the convergence of AdaGrad.

\subsection{SEBS for Momentum SGD}
Here, we propose to adapt SEBS for momentum SGD. The resulting algorithm is called mSEBS, which is presented in Algorithm~\ref{alg:mb_msgd}. mSEBS divides the whole learning procedure into $S$ stages. At each stage, mSEBS runs the Polyak's momentum SGD~\citep{Polyak} which is presented in Algorithm~\ref{alg:msgd}. Please note that mSEBS will reset the momentum to zero after each stage for the convenience of convergence proof. This is different from some mSGD implementations like that on PyTorch which does not reset the momentum to zero. In our experiments, we find that this difference does not have significant influence.

Similar to SEBS for SGD, mSEBS can also achieve $\mathcal{O}(\frac{\sigma^2}{\alpha^2\mu\epsilon})$ computation complexity and $\mathcal{O}(\frac{L}{\alpha^2\mu}\log(\frac{1}{\epsilon}))$ iteration complexity which is independent of $\sigma$ and $d$. Due to space limitation, we move the related theorems to the supplementary material.

\begin{algorithm}[t]
\caption{SEBS for Momentum SGD~(mSEBS)}\label{alg:mb_msgd}
\begin{algorithmic}
\STATE Initialization: $\tilde{\w}_1, \eta, b_1, C_1, \beta\in[0,1), \rho>1$.
\FOR{$s=1,2,\ldots,S$}
\STATE $\tilde{\w}_{s+1} = \mbox{\textit{mSGD}}(f, \mathcal{I}, \beta, \tilde{\w}_s, \eta, b_s, C_s)$;
\STATE $b_{s+1} = \rho b_s, ~C_{s+1}=\rho C_s$;
\ENDFOR
\STATE Return $\tilde{\w}_{S+1}$.
\end{algorithmic}
\end{algorithm}

\begin{algorithm}[t]
\caption{$\mbox{\textit{mSGD}}(f,\mathcal{I}, \beta, \tilde{\w}, \eta, b, C)$}\label{alg:msgd}
\begin{algorithmic}
\STATE Initialization: $\u_1 = \0, \w_1 = \tilde{\w}, M = C/b$;
\FOR{$m=1,2,\ldots,M$}
\STATE Randomly select $\mathcal{B}_m \subseteq [n]$ and $|\mathcal{B}_m| = b;$
\STATE Calculate gradient $\g_m = \nabla f_{\mathcal{B}_m}(\w_m)$;
\STATE $\u_{m+1} = \beta\u_m - \eta\g_m$;
\STATE $\w_{m+1} = \w_m + \u_{m+1}$;
\ENDFOR
\STATE Return $\w_\tau$ which is randomly sampled from $\{\w_m\}_{m=2}^{M+1}$.
\end{algorithmic}
\end{algorithm}

\subsection{SEBS for AdaGrad}
Here, we propose to adapt SEBS for AdaGrad. The resulting algorithm is called AdaSEBS, which is presented in Algorithm~\ref{alg:mb_adagrad}. AdaSEBS also divides the whole learning procedure into $S$ stages. At the $s$-th stage, AdaSEBS runs $\mbox{\textit{AdaGrad}}(f,\mathcal{I}, \delta, \nu, \tilde{\w}, \eta, b, C)$ which is presented in Algorithm~\ref{alg:adagrad}. In particular, $\mbox{\textit{AdaGrad}}$ runs the following iterations:
\begin{align}\label{equation:adagrad}
	\w_{m+1} = \arg\min_{\w} \w^T(\sum_{i=1}^m\g_i) + \frac{1}{\eta}\psi_m(\w),
\end{align}
where $\psi_m(\w) = \frac{1}{2}(\w - \tilde{\w})\H_m(\w - \tilde{\w})$. $\H_m$ is a diagonal matrix, in which the diagonal element $h_m^{(j)}$ is defined as $h_m^{(j)} = (\delta^2 + \sum_{i=1}^m (g_{i}^{(j)})^2)^{\nu}$,  where $\nu> 0, \delta \geq \|\nabla f(\w;\xi)\|_{\infty}, \forall \w, \xi$. In existing research, $\nu$ is typically set to $0.5$ for convex loss functions~\citep{DBLP:conf/colt/McMahanS10,DBLP:conf/colt/DuchiHS10} and is typically set to $1$ for strongly convex loss functions~\citep{DBLP:conf/colt/DuchiHS10,DBLP:conf/icml/MukkamalaH17}.

\begin{algorithm}[t]
\caption{SEBS for AdaGrad~(AdaSEBS)}\label{alg:mb_adagrad}
\begin{algorithmic}
\STATE Initialization: $\tilde{\w}_1, \eta, b_1, C_1, \delta > 0, \rho>1$.
\FOR{$s=1,2,\ldots,S$}
\STATE $\tilde{\w}_{s+1} = \mbox{AdaGrad}(f,\mathcal{I}, \delta, 1, \tilde{\w}_s, \eta, b_s, C_s)$.
\STATE $b_{s+1} = \rho b_s, ~C_{s+1}=\rho C_s$;
\ENDFOR
\STATE Return $\tilde{\w}_{S+1}$.
\end{algorithmic}
\end{algorithm}

\begin{algorithm}[t]
\caption{$\mbox{\textit{AdaGrad}}(f,\mathcal{I}, \delta, \nu, \tilde{\w}, \eta, b, C)$}\label{alg:adagrad}
\begin{algorithmic}
\STATE Initialization: $\w_1 = \tilde{\w}, M = C/b, \delta>0, \nu> 0$;
\FOR{$m=1,2,\ldots,M$}
\STATE Randomly select $\mathcal{B}_m \subseteq [n]$ and $|\mathcal{B}_m| = b;$
\STATE Calculate gradient $\g_m = \nabla f_{\mathcal{B}_m}(\w_m)$;
\STATE $h_{m}^{(j)} = (\delta^2 + \sum_{i=1}^m (g_{i}^{(j)})^2)^{\nu},  j=1,2,\ldots d$;
\STATE Let $\psi_m(\w) = \frac{1}{2}(\w - \w_1)\H_m(\w - \w_1)$, where $\H_m = diag(h_{m}^{(1)},\ldots,h_{m}^{(d)})$;
\STATE $\w_{m+1} = \arg\min_{\w} \w^T(\sum_{i=1}^m\g_i) + \frac{1}{\eta}\psi_m(\w)$;
\ENDFOR
\STATE Return $\w_\tau$ which is randomly sampled from $\{\w_m\}_{m=2}^{M+1}$.
\end{algorithmic}
\end{algorithm}

Let $\{\w_m\}$ be the sequence produced by AdaGrad in Algorithm~\ref{alg:adagrad}. According to \citep{DBLP:conf/colt/DuchiHS10}, we obtain
\begin{align*}
	&\frac{\alpha}{M}\sum_{m=1}^M \EB[F(\w_m) - F(\w^*)] \\
	\leq & \frac{1}{M\eta}\EB\psi_M(\w^*) + \frac{\eta}{2M}\sum_{m=1}^M\EB\|\g_m\|^2_{\psi_{m-1}^*},
\end{align*}
where $\|\g_m\|^2_{\psi_{m-1}^*} \triangleq \sum_{j=1}^d (g_m^{(j)})^2/(\delta^2 + \sum_{i=1}^{m-1} (g_i^{(j)})^2)^{\nu}$, and $M = C/b$. If we take $\nu = 0.5$, then $\sum_{m=1}^M\|\g_m\|^2_{\psi_{m-1}^*} \leq 2\sum_{j=1}^d\sqrt{\sum_{m=1}^M (g_m^{(j)})^2}$. When the gradient $g_m^{(j)}$ is relatively small, e.g., $|g_m^{(j)}| \ll 1/M$, the square root operation will make the upper bound of $\sum_{m=1}^M\|\g_m\|^2_{\psi_{m-1}^*}$ bad. Hence in this work, although $f(\w;\xi)$ is not necessarily strongly convex, we still set $\nu = 1$ and get the following one-stage training error for AdaSEBS:

\begin{lemma}\label{lemma:adagrad}
	(One-stage training error for AdaSEBS) \\ Let $\{\w_m\}$ be the sequence produced by $\mbox{AdaGrad}(f,\mathcal{I}, $ $\delta, 1, \tilde{\w},\eta, b, C)$. Then we have
	\begin{align*}
		& (\alpha - \frac{4L\eta}{\delta^2}) \EB[F(\w_\tau) - F(\w^*)] \\
		\leq & \frac{\delta^2}{2M\eta}\|\tilde{\w} - \w^*\|^2 + \frac{2\sigma^2\eta}{b\delta^2},
	\end{align*}
where $\w_\tau$ is the output of $\mbox{\textit{AdaGrad}}$, $M = C/b$, $\eta \geq \delta\|\tilde{\w} - \w^*\|$. Furthermore, if we choose $\delta, b, \eta$ such that $\frac{\delta^2b^2}{2\sigma^2C}\geq 1$ and $\eta = \frac{\delta^2b\|\tilde{\w} - \w^*\|}{\sqrt{2C}\sigma} \leq \frac{\alpha\delta^2}{8L}$, then we have
\begin{align}\label{equation:one stage adagrad}
	\EB[F(\w_\tau) - F(\w^*)] \leq \frac{4\sqrt{2}\|\tilde{\w} - \w^*\|\sigma}{\alpha\sqrt{C}}.
\end{align}

\end{lemma}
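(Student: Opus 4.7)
The plan is to start from the regret-style inequality stated just before the lemma and convert it into the one-stage training error by controlling three objects separately: the adaptive-metric term $\EB\psi_M(\w^*)$, the sum $\sum_{m=1}^M \EB\|\g_m\|^2_{\psi_{m-1}^*}$, and a residual $\frac{1}{M}\sum_m \EB[F(\w_m)-F(\w^*)]$ that I will absorb into the prefactor on the left. The identification $\EB[F(\w_\tau)-F(\w^*)] = \frac{1}{M}\sum_m \EB[F(\w_m)-F(\w^*)]$ (up to an index shift) comes from $\w_\tau$ being sampled uniformly from $\{\w_m\}_{m=2}^{M+1}$.

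For the gradient-sum term, since $\H_{m-1}$ is diagonal with entries $h_{m-1}^{(j)} \geq \delta^2$, one has $\|\g_m\|^2_{\psi_{m-1}^*} \leq \|\g_m\|^2/\delta^2$. Assumption~\ref{assumption:bounded var} applied to the mini-batch gives $\EB\|\g_m - \nabla F(\w_m)\|^2 \leq \sigma^2/b$, and Assumption~\ref{assumption:smooth} combined with the preliminary inequality $F(\w)-F(\w^*) \geq \|\nabla F(\w)\|^2/(2L)$ yields $\|\nabla F(\w_m)\|^2 \leq 2L\EB[F(\w_m)-F(\w^*)]$. Summing, the second term of the Duchi-type bound contributes $\frac{\eta\sigma^2}{2b\delta^2}$ plus a residual of order $\frac{L\eta}{\delta^2}\cdot\frac{1}{M}\sum_m \EB[F(\w_m)-F(\w^*)]$; after a Young-type split this residual is moved to the left-hand side, producing the $(\alpha-4L\eta/\delta^2)$ prefactor on the right scale.

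For the metric term, the coordinatewise bound $|g_i^{(j)}| \leq \delta$ (implied by $\delta \geq \|\nabla f(\w;\xi)\|_\infty$) controls $\H_M$ and hence $\EB\psi_M(\w^*)$ by a multiple of $\delta^2\|\tilde{\w}-\w^*\|^2$. The accumulated factor of $M$ that one naively pays in $\H_M$ is then traded away using the hypothesis $\eta \geq \delta\|\tilde{\w}-\w^*\|$, which converts one power of $\|\tilde{\w}-\w^*\|$ into $\eta/\delta$ and collapses the bound to the stated $\frac{\delta^2}{2M\eta}\|\tilde{\w}-\w^*\|^2$ form. This step is the main obstacle: $\H_M$ is stochastic and path-dependent, grows in expectation with the number of iterations, and only the explicit coupling between $\eta$, $\delta$ and $\|\tilde{\w}-\w^*\|$ prevents an extra factor of $M$ from surviving.

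For the explicit bound (\ref{equation:one stage adagrad}), I would substitute $\eta = \delta^2 b\|\tilde{\w}-\w^*\|/(\sqrt{2C}\sigma)$ and $M=C/b$ into the first inequality of the lemma: the condition $\delta^2 b^2/(2\sigma^2 C) \geq 1$ is exactly $\eta \geq \delta\|\tilde{\w}-\w^*\|$, while $\eta \leq \alpha\delta^2/(8L)$ gives $\alpha - 4L\eta/\delta^2 \geq \alpha/2$. With these substitutions the two right-hand side terms both balance to order $\sigma\|\tilde{\w}-\w^*\|/\sqrt{C}$, and dividing by $\alpha/2$ produces the stated $\frac{4\sqrt{2}\|\tilde{\w}-\w^*\|\sigma}{\alpha\sqrt{C}}$.
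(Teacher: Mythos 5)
Your handling of the gradient-sum term is fine (bounding $\|\g_m\|^2_{\psi_{m-1}^*}\leq\|\g_m\|^2/\delta^2$ via $h_{m-1}^{(j)}\geq\delta^2$, then $\EB\|\g_m\|^2\leq\sigma^2/b+2L\,\EB[F(\w_m)-F(\w^*)]$ and absorbing the suboptimality part into the left-hand side; the paper reaches the same place through a log-telescoping bound followed by $\ln(1+x)\leq x$). The genuine gap is in your treatment of the metric term $\EB\psi_M(\w^*)$, which you yourself flag as the main obstacle but then resolve incorrectly. With $\nu=1$, the coordinatewise bound $|g_i^{(j)}|\leq\delta$ gives $h_M^{(j)}\leq(M+1)\delta^2$, hence $\frac{1}{M\eta}\psi_M(\w^*)\lesssim\frac{\delta^2}{2\eta}\|\tilde{\w}-\w^*\|^2$; applying $\eta\geq\delta\|\tilde{\w}-\w^*\|$ to trade one power of $\|\tilde{\w}-\w^*\|$ for $\eta/\delta$ only turns this into $\approx\frac{\delta}{2}\|\tilde{\w}-\w^*\|$, a quantity independent of $M$ and $C$. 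That is a factor of $M$ larger than the stated $\frac{\delta^2}{2M\eta}\|\tilde{\w}-\w^*\|^2$, and since it does not vanish as $C\to\infty$ it cannot yield the $\mathcal{O}(\sigma\|\tilde{\w}-\w^*\|/\sqrt{C})$ conclusion in (\ref{equation:one stage adagrad}). No algebraic rearrangement of the hypothesis $\eta\geq\delta\|\tilde{\w}-\w^*\|$ can manufacture the missing $1/M$.

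What the paper actually does at this point is split $\psi_M(\w^*)=\frac{1}{2}\sum_j\bigl(\delta^2+\|\g_{1:M,j}\|^2\bigr)(w_1^{(j)}-w^{*(j)})^2$: the $\delta^2$ part produces exactly the stated $\frac{\delta^2}{M\eta}\|\tilde{\w}-\w^*\|^2$-type term, while the data-dependent part is bounded by $\|\tilde{\w}-\w^*\|^2\sum_m\|\g_m\|^2$ and then controlled \emph{in expectation} by $\EB\sum_m\|\g_m\|^2\leq M\sigma^2/b+2LM\,\EB[X]$, the same decomposition you used for the other term. Only then is the hypothesis $\eta\geq\delta\|\tilde{\w}-\w^*\|$ invoked, in the form $\|\tilde{\w}-\w^*\|^2/\eta\leq\eta/\delta^2$, so that the combined coefficient of $(\sigma^2/b+2L\,\EB[X])$ is at most $2\eta/\delta^2$; the $2L\,\EB[X]$ piece is absorbed into the left-hand side and gives the $4L\eta/\delta^2$ in the prefactor (note this prefactor comes from \emph{both} the metric term and the gradient-sum term, not from the gradient-sum term alone as your sketch suggests), and the $\sigma^2/b$ piece gives the $2\sigma^2\eta/(b\delta^2)$ term. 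Your final step — substituting $\eta=\delta^2 b\|\tilde{\w}-\w^*\|/(\sqrt{2C}\sigma)$, checking that $\delta^2b^2/(2\sigma^2C)\geq1$ is equivalent to $\eta\geq\delta\|\tilde{\w}-\w^*\|$, and using $\eta\leq\alpha\delta^2/(8L)$ to get the prefactor $\geq\alpha/2$ — is correct and matches the paper, but it only goes through once the first inequality has been established by the expectation-based argument above.
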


According to Lemma~\ref{lemma:adagrad}, we actually prove a $\mathcal{O}(1/\sqrt{C})$ error while \citep{DBLP:conf/colt/DuchiHS10} proves a $\mathcal{O}(1/\sqrt{M})$ error, where $C = Mb$. Furthermore, our one-stage training error is independent of the input $\delta$. Since the exact upper bound of $\|\nabla f(\w;\xi)\|_{\infty}$ is usually unknown, we can set a large $\delta$ without loss of training error in our presented AdaGrad. Hence, the error bound in~(\ref{equation:one stage adagrad}) is better than that in~\citep{DBLP:conf/colt/DuchiHS10} in which a large $\delta$ may lead to a large error.

Then we have the following convergence result for AdaSEBS:
\begin{theorem}\label{theorem:mb_adagrad}
	Let $\{\tilde{\w}_s\}$ be the sequence produced by Alorithm~\ref{alg:mb_adagrad}, $F(\tilde{\w}_1) - F(\w^*) \leq \epsilon_1$.
	By setting $C_s = \theta/\epsilon_s$, $S =\log_\rho(\epsilon_1/\epsilon)$ and
	\begin{align*}
		b_s = \frac{\alpha\sigma\sqrt{\mu\theta}}{8L\epsilon_s}, \eta_s = \frac{\alpha\delta^2}{8L},
	\end{align*}
	we obtain $\EB[F(\tilde{\w}_{s+1}) - F(\w_s)] \leq \epsilon$.
Here, $\theta = \frac{64\sigma^2\rho^2}{\alpha^2\mu}$, $\delta \geq \frac{8L\sqrt{2\epsilon_1}}{\alpha\sqrt{\mu}}$, $\epsilon_{s+1} = \epsilon_s/\rho, s\geq 1$.
\end{theorem}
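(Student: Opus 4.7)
The proof should mirror that of Theorem~\ref{theorem:mb_sgd}: I would use induction on the stage index $s$ to establish that $\EB[F(\tilde{\w}_s) - F(\w^*)] \leq \epsilon_s$ for every $s \geq 1$, where $\epsilon_s = \epsilon_1/\rho^{s-1}$. The base case $s=1$ is the hypothesis $F(\tilde{\w}_1) - F(\w^*) \leq \epsilon_1$. For the inductive step, I would condition on $\tilde{\w}_s$, apply the general (non-specific) form of Lemma~\ref{lemma:adagrad} to the $s$-th invocation of AdaGrad with parameters $(\tilde{\w}_s, \eta_s, b_s, C_s)$, then take an outer expectation and plug in the inductive hypothesis. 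Finally $S = \log_\rho(\epsilon_1/\epsilon)$ iterations reduce $\epsilon_1$ to $\epsilon$, yielding $\EB[F(\tilde{\w}_{S+1}) - F(\w^*)] \leq \epsilon$.

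\textbf{Key calculation in the inductive step.} With $\eta_s = \alpha\delta^2/(8L)$, the prefactor $\alpha - 4L\eta_s/\delta^2$ on the left-hand side of Lemma~\ref{lemma:adagrad} collapses to $\alpha/2$, while the right-hand side becomes $\tfrac{4L b_s}{\alpha C_s}\|\tilde{\w}_s - \w^*\|^2 + \tfrac{\sigma^2\alpha}{4L b_s}$. I would then use the quadratic growth implied by the $\mu$-PL condition, $\|\tilde{\w}_s - \w^*\|^2 \leq 2(F(\tilde{\w}_s) - F(\w^*))/\mu$, take expectation over $\tilde{\w}_s$, and apply the inductive hypothesis to upper bound the first term by $\tfrac{8L b_s \epsilon_s}{\alpha C_s \mu}$. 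Substituting $b_s = \alpha\sigma\sqrt{\mu\theta}/(8L\epsilon_s)$, $C_s = \theta/\epsilon_s$, and $\theta = 64\sigma^2\rho^2/(\alpha^2\mu)$ should render both terms proportional to $\sigma\epsilon_s/\sqrt{\mu\theta} = \alpha\epsilon_s/(8\rho)$. After multiplication by $2/\alpha$ the sum is at most $\epsilon_s/\rho = \epsilon_{s+1}$, closing the induction.

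\textbf{Verification of the lemma's hypotheses and main obstacle.} Applying Lemma~\ref{lemma:adagrad} requires (i) $\delta^2 b_s^2/(2\sigma^2 C_s) \geq 1$ and (ii) $\eta_s \geq \delta\|\tilde{\w}_s - \w^*\|$. A direct substitution reduces (i) to $\delta^2 \geq 128 L^2\epsilon_s/(\alpha^2\mu)$, and since $\epsilon_s \leq \epsilon_1$ this is exactly the standing assumption $\delta \geq 8L\sqrt{2\epsilon_1}/(\alpha\sqrt{\mu})$. Condition (ii), by PL quadratic growth, reduces to the same inequality once $\|\tilde{\w}_s - \w^*\|^2$ is replaced by $2(F(\tilde{\w}_s) - F(\w^*))/\mu$. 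The main obstacle is precisely this step: (ii) is a pointwise requirement on the random iterate $\tilde{\w}_s$, whereas the inductive hypothesis controls only $\EB[F(\tilde{\w}_s) - F(\w^*)]$. I would resolve this by strengthening the induction so that $\eta_s \geq \delta\|\tilde{\w}_s - \w^*\|$ is absorbed into the conditional application of Lemma~\ref{lemma:adagrad} (either by tracking an almost-sure bound on $F(\tilde{\w}_s) - F(\w^*)$ propagated through the PL inequality, or by noting that the constant $\delta$ chosen in terms of $\epsilon_1$ dominates every $\sqrt{\epsilon_s}$ that appears along the trajectory). Aside from this subtlety the remaining work is substitution and arithmetic, following the template of Theorem~\ref{theorem:mb_sgd}.
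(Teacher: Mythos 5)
Your proposal follows essentially the same route as the paper's proof: induction on the stage, the general form of Lemma~\ref{lemma:adagrad} with $\eta_s = \alpha\delta^2/(8L)$ so that the prefactor becomes $\alpha/2$, PL-induced quadratic growth to replace $\|\tilde{\w}_s - \w^*\|^2$, substitution of $b_s$, $C_s$, $\theta$, and verification of the condition $\delta^2 b_s^2/(2\sigma^2 C_s)\ge 1$ via $\epsilon_s \le \epsilon_1$ and the lower bound on $\delta$. The one place you go beyond the paper is the pointwise requirement $\eta_s \ge \delta\|\tilde{\w}_s - \w^*\|$: the paper's own proof silently skips it even though its induction also only controls expectations, whereas you correctly flag the expectation-versus-almost-sure issue and sketch a fix, so your treatment is if anything the more careful one.
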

Similar to SEBS for vanilla SGD, we also obtain $\mathcal{O}(\frac{\sigma^2}{\alpha^2\mu\epsilon})$ computation complexity and $\mathcal{O}(\frac{L}{\alpha^2\mu}\log(\frac{1}{\epsilon}))$ iteration complexity.

Recently, there is another work about stagewise AdaGrad, called SADAGrad~\citep{DBLP:conf/icml/ChenXCY18}, which mainly focuses on the case that $\g_m$ is sparse. SADAGrad adopts a constant batch size and decreases the learning rate geometrically by stage. Under convex and quadratic growth condition, SADAGrad achieves an iteration complexity of $\mathcal{O}(\frac{d\delta^2}{\mu\epsilon})$, which is dependent on dimension $d$. Hence, AdaSEBS is better than SADAGrad.

\section{Experiments}
First, we verify the theory about the relationship between batch size and model initialization. We consider a synthetic problem:
\begin{align}\label{equation:synthetic}
	\min_{\w\in \RB^{100}}F(\w) = \frac{1}{2n}\sum_{i=1}^n(\w - \xi_i)^T\D(\w - \xi_i),
\end{align}
where $n=10^4$, each data $\xi_i \in \RB^{100}$ is sampled from the gaussian distribution $\NM(\0,\I)$, and $\D$ is a diagonal matrix with $D(i,i) = i$. The corresponding $\alpha, \mu, L$ are $1, 1, 100$, respectively, and the optimal solution of (\ref{equation:synthetic}) is $\w^* = \sum_{i=1}^n\xi_i/n$. We run vanilla SGD in~(\ref{equation:update of vanilla sgd}) with a fixed computation complexity $C = n$ to solve~(\ref{equation:synthetic}). We set the model parameter initialization $\w_1 = \w^* + x*\d$, where $\|\d\| = 1$, $x\in[10,100]$. For each $x$, we aim to find the optimal batch size that can achieve the smallest value of $\|\hat{\w} - \w^*\|$, where $\hat{\w}$ is the output of vanilla SGD algorithm in~(\ref{equation:update of vanilla sgd}). We repeat 50 times and the average result about the optimal batch size is presented in Figure~\ref{figure:relation}. We can find that the optimal batch size is almost proportional to $1/\|\w_1 - \w^*\|$, and a larger learning rate implies a larger optimal batch size. These phenomenons are consistent with our theory in~(\ref{equation:basic relation}) where $\eta^*/b^* \propto \|\w_1 - \w^*\|$ for a fixed computation complexity $C$.
\begin{figure}[!thb]
\centering
\includegraphics[width=7cm]{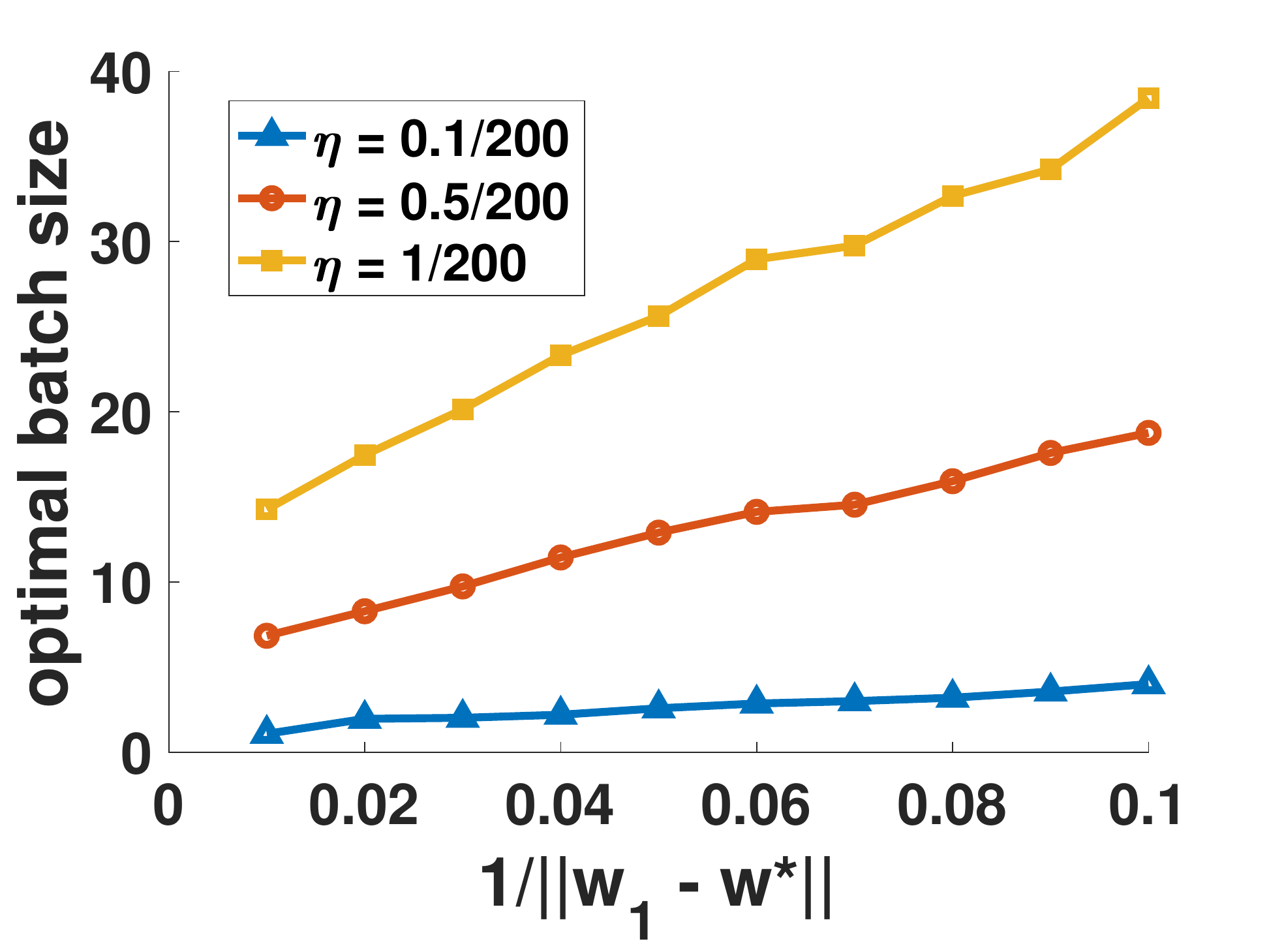}
\caption{Relationship between the optimal batch size and model initialization in vanilla SGD for solving problem~(\ref{equation:synthetic}).}\label{figure:relation}
\end{figure}

\begin{figure*}[t]
\centering
\subfigure{
	\includegraphics[width = 5.0cm]{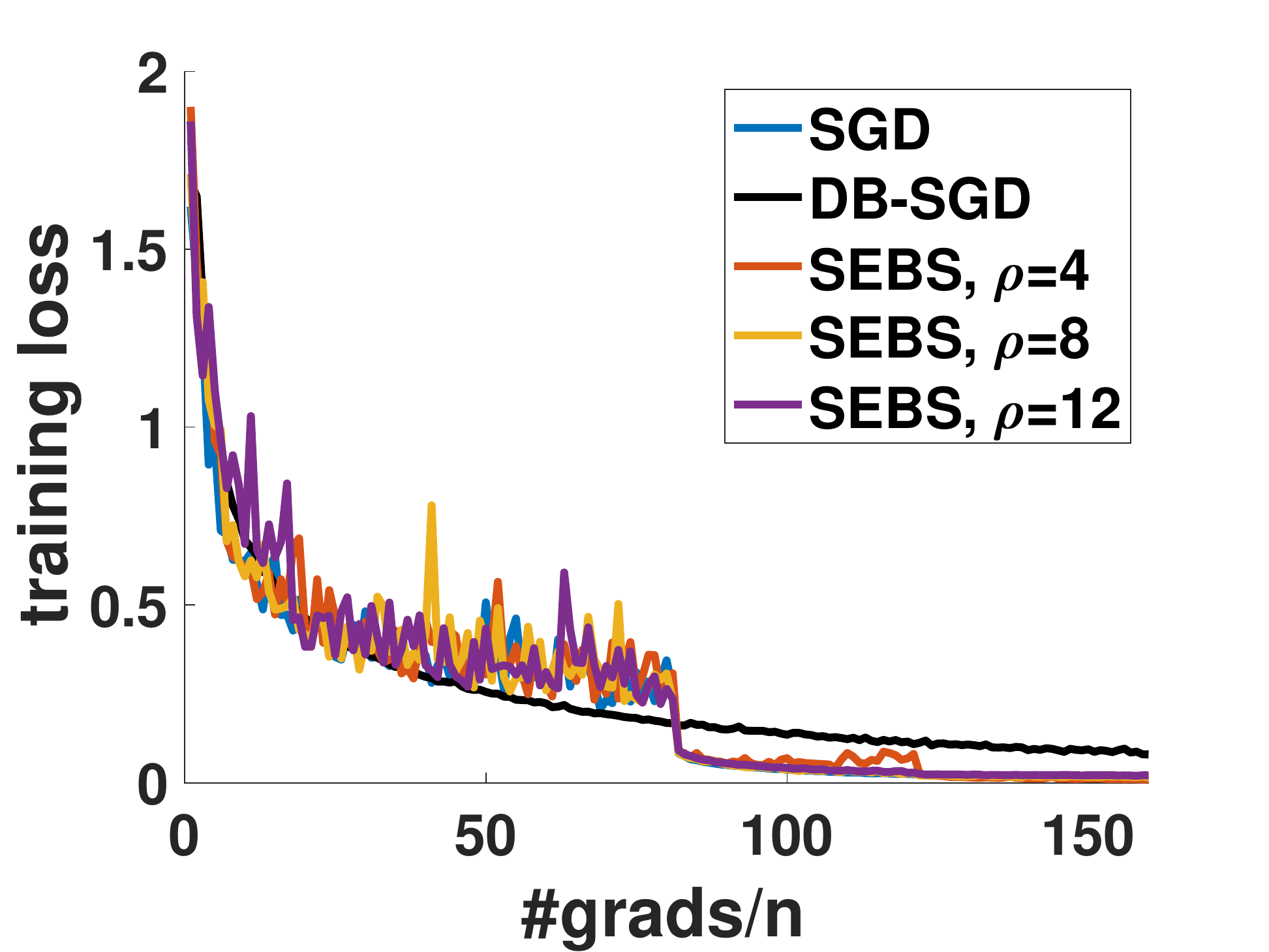}
	\includegraphics[width = 5.0cm]{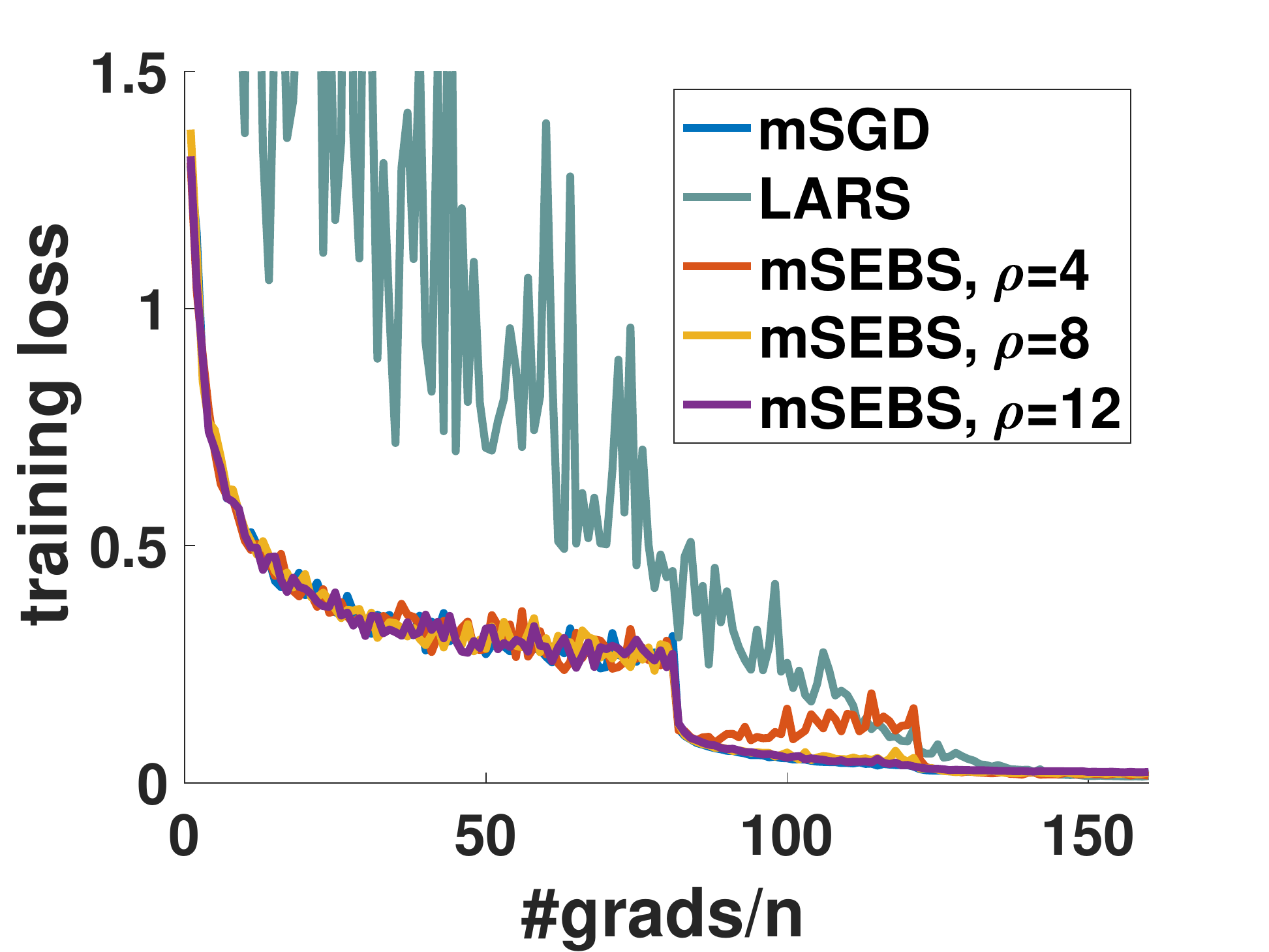}
	\includegraphics[width = 5.0cm]{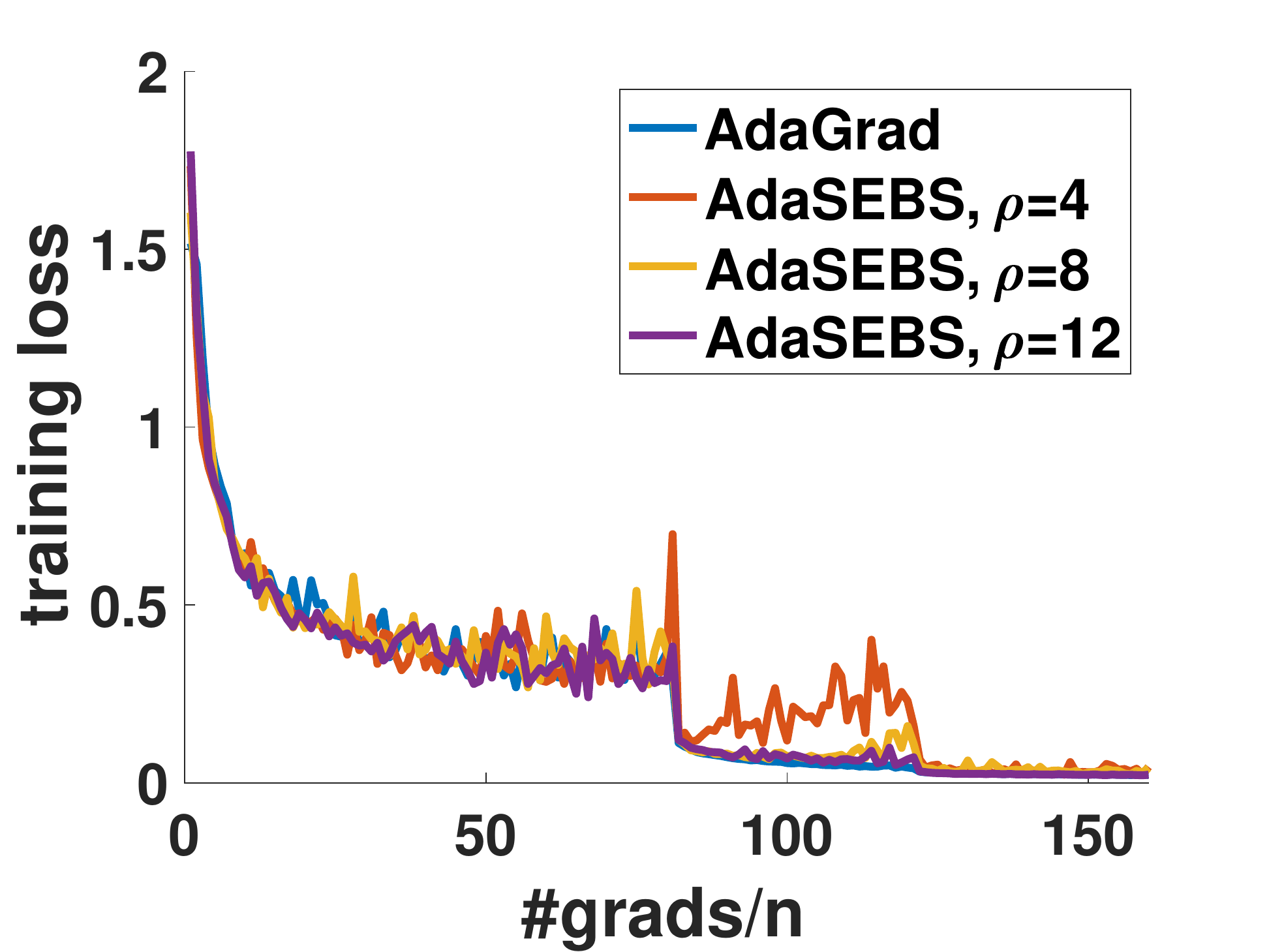}
}
\subfigure{
	\includegraphics[width = 5.0cm]{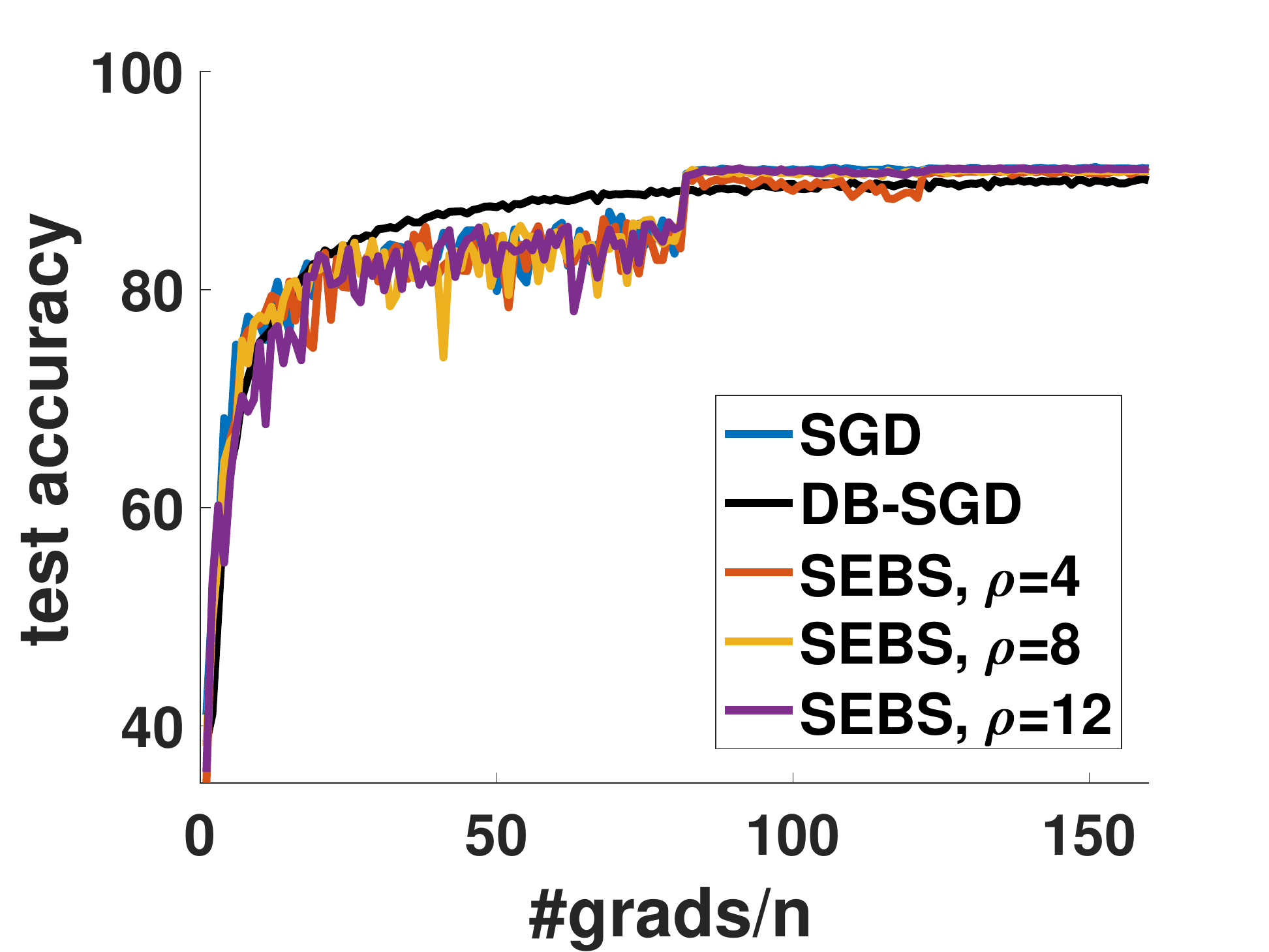}
	\includegraphics[width = 5.0cm]{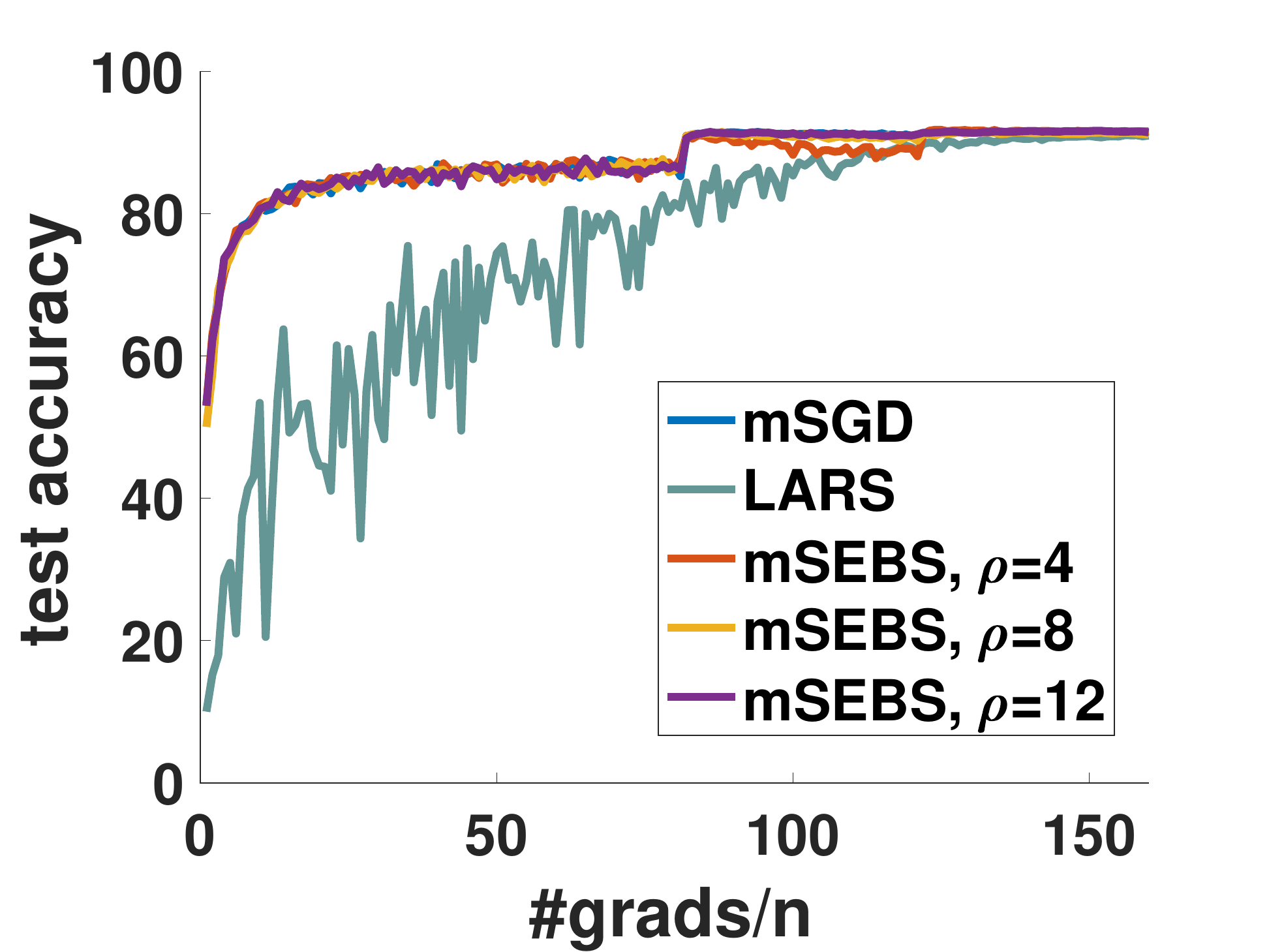}
	\includegraphics[width = 5.0cm]{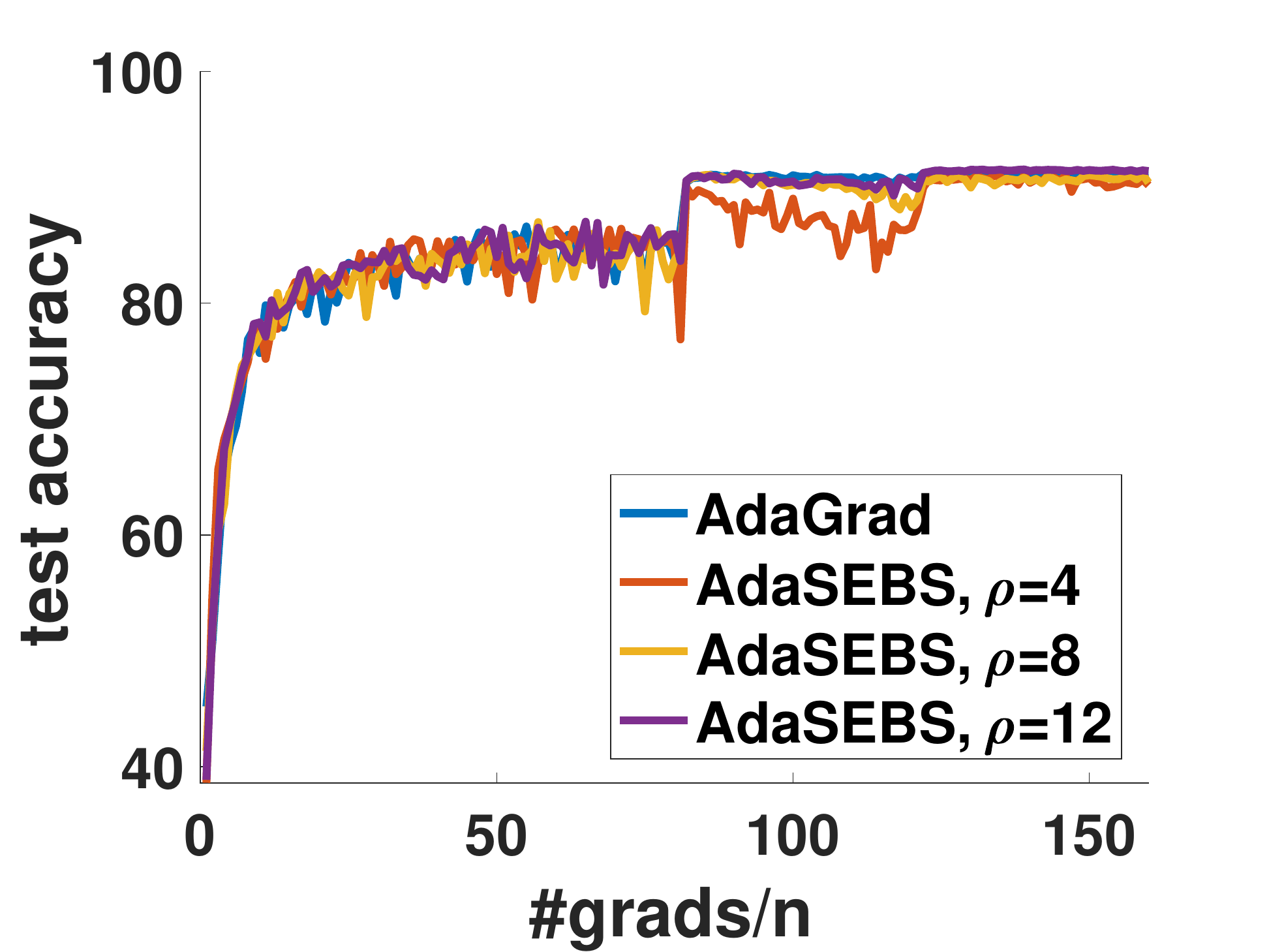}
}
\caption{Learning curves of training ResNet20 on CIFAR10 with different methods. The learning rates for SGD, mSGD and AdaGrad experiments are 0.5, 0.1, 1, respectively. The $\gamma$ in SGD experiment is $10^4$ and the $\delta$ in AdaGrad experiments is $1$.}\label{figure:res20}
\end{figure*}

Next, we consider a real problem which trains ResNet20 with 0.0001 weight decay on CIFAR10. The experiments are conducted on the PyTorch platform with an NVIDIA V100 GPU~(32G GPU memory). For classical stagewise methods, we follow~\citep{DBLP:conf/cvpr/HeZRS16} which divides the learning rate by 10 at the $80$, $120$ epochs. According to our theory that $\eta_s/b_s = \mathcal{O}(\epsilon_s)$, in SEBS, mSEBS and AdaSEBS, the learning rate is constant and the batch size is scaled by $\rho$ at the $80$, $120$ epochs. We set $\rho = 4, 8, 12$ for illustration. In the experiments about vanilla SGD, we also compare SEBS with DB-SGD~\citep{DBLP:conf/icml/YuJ19} in which the scaling ratio for batch size is $1.02$. The initial batch size of these methods is $128$. In the experiments about momentum SGD, we also compare mSEBS with the large batch training method LARS~\citep{DBLP:journals/corr/abs-1708-03888}. The poly power and warm-up of LARS are the same as that in ~\citep{DBLP:journals/corr/abs-1708-03888}. We set the batch size, based learning rate, scaling factor of LARS as 4096, 3.2, 0.01. The results are presented in Figure~\ref{figure:res20}. We can find that SEBS, mSEBS and AdaSEBS can achieve similar performance, measured based on \emph{computation complexity}~(epochs), as classical stagewise counterparts respectively, especially when $\rho$ is large. When measured based on \emph{iteration complexity} which is directly related to computation time or wall-clock time, SEBS, mSEBS and AdaSEBS are more efficient than their classical stagewise counterparts respectively. In particular, classical stagewise counterparts expend $62.5$k parameter updates, while SEBS with $\rho = 12$ only expends $32.6$k parameter updates. Since DB-SGD increases the batch size in every epoch, it falls into a local minimum and the accuracy is worse than SEBS. We also try some other scaling ratios for DB-SGD and DB-SGD still cannot achieve performance as good as classical stagewise SGD and SEBS on either training loss or test accuracy. Different from \mbox{DB-SGD}, SEBS increases the batch size after a stage which contains several epochs, and hence it achieves better performance than DB-SGD. Although LARS expends fewer parameter updates than mSEBS, it only achieves test accuracy of $90.97\%$, while mSGD and mSEBS with $\rho=12$ achieve test accuracy of $91.74\%$. We also try to set the scaling factor of LARS as that in~\citep{DBLP:journals/corr/abs-1708-03888}, but the test accuracy further drops $5\%$.

\begin{table*}[t]
\centering
\begin{tabular}{l|lcccc}
  ~        &  ~    & initial $b$ & initial $\eta$ & \#parameter updates & test accuracy \\ \hline
  \multirow{3}*{ResNet18} & mSGD  & 256 & 0.1 & 450k            & 69.56\%       \\
                          & mSGD* & 256 & 0.1 & 450k            & 69.90\%       \\
                          & mSEBS & 256 & 0.1 & 160k            & 69.75\% \\ \hline
  \multirow{4}*{ResNet50} & mSGD  & 256 & 0.1 & 450k            & 75.85\% \\
                          & mSGD* & 256 & 0.1 & 450k            & 75.85\% \\
                          & mSEBS & 256 & 0.1 & 160k            & 75.87\% \\
                          & \citep{DBLP:conf/iclr/SmithKYL18} & 8192 & 3.2 & 5.63k & 73.44\% \\ \hline
\end{tabular}
\caption{Empirical results on ImageNet. mSGD* is the momentum SGD implemented on PyTorch, in which the momentum is not reset to zero at the 30, 60 epochs.}\label{table:imagenet}
\vspace{-0.2cm}
\end{table*}

We also compare mSEBS with momentum SGD~(mSGD) by training ResNet18 and ResNet50 with 0.0001 weight decay on ImageNet. Data augmentation and initialization of $\w$~(including the parameters of batch normalization layers) follow the code of PyTorch~\footnote{https://github.com/pytorch/examples/tree/master/imagenet.}. In mSGD and mSEBS, the initial batch size is $256$ and the learning rate is $0.1$. Following~\citep{DBLP:conf/cvpr/HeZRS16}, we divide the learning rates of mSGD by $10$ and scale the batch size of mSEBS by $12$, at the $30$, $60$ epochs. The results are presented in Table~\ref{table:imagenet}. We can see that mSEBS achieves the same performance as momentum SGD on test accuracy. mSEBS scales the batch size to 36k after 60 epochs and saves about $64\%$ parameter updates in total. We also run the large batch training method in~\citep{DBLP:conf/iclr/SmithKYL18} to train ResNet50: the initial batch size and learning rate are 8192 and 3.2 respectively, the batch size is scaled by 10 at the 30 epoch, the learning rate is divided by 10 at the 60, 80 epochs. Although the method in~\citep{DBLP:conf/iclr/SmithKYL18} expends fewer parameter updates, its accuracy drops $2.4\%$. Hence, SEBS is better than classical stagewise methods and more universal than large batch training methods.

\section{Conclusion}
In this paper, we propose a novel method called SEBS to set proper batch size for SGD-based machine learning. Both theoretical and empirical results show that SEBS can reduce the number of parameter updates without loss of training error and test accuracy, compared to classical stagewise \mbox{SGD} methods.


\vskip 0.2in
\bibliography{ref}

\begin{thebibliography}{29}
\providecommand{\natexlab}[1]{#1}
\providecommand{\url}[1]{\texttt{#1}}
\expandafter\ifx\csname urlstyle\endcsname\relax
  \providecommand{\doi}[1]{doi: #1}\else
  \providecommand{\doi}{doi: \begingroup \urlstyle{rm}\Url}\fi

\bibitem[Allen{-}Zhu(2018)]{DBLP:conf/nips/Allen-Zhu18}
Zeyuan Allen{-}Zhu.
\newblock How to make the gradients small stochastically: Even faster convex
  and nonconvex {SGD}.
\newblock In \emph{Advances in Neural Information Processing Systems}, 2018.

\bibitem[Bottou(1998)]{Bottou:1999:OLS:304710.304720}
L{\'e}on Bottou.
\newblock Online learning in neural networks.
\newblock chapter Online Learning and Stochastic Approximations, pages 9--42.
  1998.

\bibitem[Byrd et~al.(2012)Byrd, Chin, Nocedal, and
  Wu]{DBLP:journals/mp/ByrdCNW12}
Richard~H. Byrd, Gillian~M. Chin, Jorge Nocedal, and Yuchen Wu.
\newblock Sample size selection in optimization methods for machine learning.
\newblock \emph{Mathematical Programming}, 134\penalty0 (1):\penalty0 127--155,
  2012.

\bibitem[Charles and Papailiopoulos(2018)]{DBLP:conf/icml/CharlesP18}
Zachary~B. Charles and Dimitris~S. Papailiopoulos.
\newblock Stability and generalization of learning algorithms that converge to
  global optima.
\newblock In \emph{Proceedings of the International Conference on Machine
  Learning}, 2018.

\bibitem[Chen et~al.(2018)Chen, Xu, Chen, and Yang]{DBLP:conf/icml/ChenXCY18}
Zaiyi Chen, Yi~Xu, Enhong Chen, and Tianbao Yang.
\newblock {SADAGRAD:} strongly adaptive stochastic gradient methods.
\newblock In \emph{Proceedings of the International Conference on Machine
  Learning}, pages 912--920, 2018.

\bibitem[Chen et~al.(2019{\natexlab{a}})Chen, Xu, Hu, and
  Yang]{DBLP:conf/icml/ChenXHY19}
Zaiyi Chen, Yi~Xu, Haoyuan Hu, and Tianbao Yang.
\newblock Katalyst: Boosting convex katayusha for non-convex problems with a
  large condition number.
\newblock In \emph{Proceedings of the International Conference on Machine
  Learning}, 2019{\natexlab{a}}.

\bibitem[Chen et~al.(2019{\natexlab{b}})Chen, Yuan, Yi, Zhou, Chen, and
  Yang]{DBLP:conf/iclr/ChenYYZCY19}
Zaiyi Chen, Zhuoning Yuan, Jinfeng Yi, Bowen Zhou, Enhong Chen, and Tianbao
  Yang.
\newblock Universal stagewise learning for non-convex problems with convergence
  on averaged solutions.
\newblock In \emph{Proceedings of the International Conference on Learning
  Representations}, 2019{\natexlab{b}}.

\bibitem[De et~al.(2017)De, Yadav, Jacobs, and
  Goldstein]{DBLP:conf/aistats/DeYJG17}
Soham De, Abhay~Kumar Yadav, David~W. Jacobs, and Tom Goldstein.
\newblock Automated inference with adaptive batches.
\newblock In \emph{Proceedings of the International Conference on Artificial
  Intelligence and Statistics}, 2017.

\bibitem[Duchi et~al.(2010)Duchi, Hazan, and Singer]{DBLP:conf/colt/DuchiHS10}
John~C. Duchi, Elad Hazan, and Yoram Singer.
\newblock Adaptive subgradient methods for online learning and stochastic
  optimization.
\newblock In \emph{Proceedings of the Conference on Learning Theory}, pages
  257--269, 2010.

\bibitem[Friedlander and Schmidt(2012)]{DBLP:journals/siamsc/FriedlanderS12}
Michael~P. Friedlander and Mark~W. Schmidt.
\newblock Hybrid deterministic-stochastic methods for data fitting.
\newblock \emph{{SIAM} Journal on Scientific Computing}, 34\penalty0 (3), 2012.

\bibitem[Ghadimi and Lan(2013)]{DBLP:journals/siamjo/GhadimiL13}
Saeed Ghadimi and Guanghui Lan.
\newblock Optimal stochastic approximation algorithms for strongly convex
  stochastic composite optimization, {II:} shrinking procedures and optimal
  algorithms.
\newblock \emph{{SIAM} Journal on Optimization}, 23\penalty0 (4):\penalty0
  2061--2089, 2013.

\bibitem[Ghadimi and Lan(2016)]{DBLP:journals/mp/GhadimiL16}
Saeed Ghadimi and Guanghui Lan.
\newblock Accelerated gradient methods for nonconvex nonlinear and stochastic
  programming.
\newblock \emph{Mathematical Programming}, 156\penalty0 (1-2):\penalty0 59--99,
  2016.

\bibitem[Goyal et~al.(2017)Goyal, Doll{\'{a}}r, Girshick, Noordhuis,
  Wesolowski, Kyrola, Tulloch, Jia, and He]{DBLP:journals/corr/GoyalDGNWKTJH17}
Priya Goyal, Piotr Doll{\'{a}}r, Ross~B. Girshick, Pieter Noordhuis, Lukasz
  Wesolowski, Aapo Kyrola, Andrew Tulloch, Yangqing Jia, and Kaiming He.
\newblock Accurate, large minibatch {SGD:} training imagenet in 1 hour.
\newblock \emph{CoRR}, abs/1706.02677, 2017.

\bibitem[Hardt et~al.(2016)Hardt, Recht, and Singer]{DBLP:conf/icml/HardtRS16}
Moritz Hardt, Ben Recht, and Yoram Singer.
\newblock Train faster, generalize better: Stability of stochastic gradient
  descent.
\newblock In \emph{Proceedings of the 33nd International Conference on Machine
  Learning}, 2016.

\bibitem[He et~al.(2016)He, Zhang, Ren, and Sun]{DBLP:conf/cvpr/HeZRS16}
Kaiming He, Xiangyu Zhang, Shaoqing Ren, and Jian Sun.
\newblock Deep residual learning for image recognition.
\newblock In \emph{Proceedings of Conference on Computer Vision and Pattern
  Recognition}, 2016.

\bibitem[Karimi et~al.(2016)Karimi, Nutini, and
  Schmidt]{DBLP:conf/pkdd/KarimiNS16}
Hamed Karimi, Julie Nutini, and Mark~W. Schmidt.
\newblock Linear convergence of gradient and proximal-gradient methods under
  the polyak-{\l}ojasiewicz condition.
\newblock In \emph{Proceedings of Machine Learning and Knowledge Discovery in
  Databases}, 2016.

\bibitem[Krizhevsky et~al.(2012)Krizhevsky, Sutskever, and
  Hinton]{DBLP:conf/nips/KrizhevskySH12}
Alex Krizhevsky, Ilya Sutskever, and Geoffrey~E. Hinton.
\newblock Imagenet classification with deep convolutional neural networks.
\newblock In \emph{Advances in Neural Information Processing Systems}, 2012.

\bibitem[McCandlish et~al.(2018)McCandlish, Kaplan, Amodei, and
  Team]{DBLP:journals/corr/abs-1812-06162}
Sam McCandlish, Jared Kaplan, Dario Amodei, and OpenAI~Dota Team.
\newblock An empirical model of large-batch training.
\newblock \emph{CoRR}, abs/1812.06162, 2018.

\bibitem[McMahan and Streeter(2010)]{DBLP:conf/colt/McMahanS10}
H.~Brendan McMahan and Matthew~J. Streeter.
\newblock Adaptive bound optimization for online convex optimization.
\newblock In \emph{Proceedings of the Conference on Learning Theory}, pages
  244--256, 2010.

\bibitem[Nesterov(2004)]{DBLP:books/sp/Nesterov04}
Yurii~E. Nesterov.
\newblock \emph{Introductory Lectures on Convex Optimization - {A} Basic
  Course}, volume~87 of \emph{Applied Optimization}.
\newblock Springer, 2004.

\bibitem[Polyak(1964)]{Polyak}
Boris Polyak.
\newblock Some methods of speeding up the convergence of iteration methods.
\newblock \emph{Ussr Computational Mathematics and Mathematical Physics},
  4:\penalty0 1--17, 12 1964.

\bibitem[Robbins and Monro(1951)]{robbins1951}
Herbert Robbins and Sutton Monro.
\newblock A stochastic approximation method.
\newblock \emph{The Annals of Mathematical Statistics}, 22\penalty0
  (3):\penalty0 400--407, 1951.

\bibitem[Smith et~al.(2018)Smith, Kindermans, Ying, and
  Le]{DBLP:conf/iclr/SmithKYL18}
Samuel~L. Smith, Pieter{-}Jan Kindermans, Chris Ying, and Quoc~V. Le.
\newblock Don't decay the learning rate, increase the batch size.
\newblock In \emph{Proceedings of the International Conference on Learning
  Representations}, 2018.

\bibitem[Tseng(1998)]{DBLP:journals/siamjo/Tseng98}
Paul Tseng.
\newblock An incremental gradient(-projection) method with momentum term and
  adaptive stepsize rule.
\newblock \emph{{SIAM} Journal on Optimization}, 8\penalty0 (2):\penalty0
  506--531, 1998.

\bibitem[Yan et~al.(2018)Yan, Yang, Li, Lin, and Yang]{ijcai2018-410}
Yan Yan, Tianbao Yang, Zhe Li, Qihang Lin, and Yi~Yang.
\newblock A unified analysis of stochastic momentum methods for deep learning.
\newblock In \emph{Proceedings of the International Joint Conference on
  Artificial Intelligence}. International Joint Conferences on Artificial
  Intelligence Organization, 2018.

\bibitem[Yin et~al.(2018)Yin, Pananjady, Lam, Papailiopoulos, Ramchandran, and
  Bartlett]{DBLP:conf/aistats/YinPLPRB18}
Dong Yin, Ashwin Pananjady, Maximilian Lam, Dimitris~S. Papailiopoulos, Kannan
  Ramchandran, and Peter~L. Bartlett.
\newblock Gradient diversity: a key ingredient for scalable distributed
  learning.
\newblock In \emph{Proceedings of the International Conference on Artificial
  Intelligence and Statistics}, 2018.

\bibitem[You et~al.(2017)You, Gitman, and
  Ginsburg]{DBLP:journals/corr/abs-1708-03888}
Yang You, Igor Gitman, and Boris Ginsburg.
\newblock Scaling {SGD} batch size to 32k for imagenet training.
\newblock \emph{CoRR}, abs/1708.03888, 2017.

\bibitem[Yu and Jin(2019)]{DBLP:conf/icml/YuJ19}
Hao Yu and Rong Jin.
\newblock On the computation and communication complexity of parallel {SGD}
  with dynamic batch sizes for stochastic non-convex optimization.
\newblock In \emph{Proceedings of the International Conference on Machine
  Learning}, 2019.

\bibitem[Yuan et~al.(2019)Yuan, Yan, Jin, and
  Yang]{DBLP:conf/nips/NIPS2019_8529}
Zhuoning Yuan, Yan Yan, Rong Jin, and Tianbao Yang.
\newblock Stagewise training accelerates convergence of testing error over sgd.
\newblock In \emph{Advances in Neural Information Processing Systems}, 2019.

\end{thebibliography}

\newpage
\appendix
\onecolumn

\section{SEBS}\label{appendix:sebs}

\subsection{Proof of Lemma~\ref{lemma:one stage error for pSGD}}
According to the updates $\w_{m+1} = \mathop{\arg\min}_{\w}~\g_m^T\w + \frac{1}{2\eta}\|\w - \w_m\|^2 + r(\w)$, we get that
\begin{align*}
	(\g_m + \frac{1}{\eta}(\w_{m+1} - \w_m) + \nabla r(\w_{m+1}))^T(\w_{m+1} - \w)\leq 0, \forall \w.
\end{align*}
Using the fact that $ab = \frac{1}{2}(a^2+b^2-(a-b)^2)$ and $r(\w)$ is convex, we obtain
\begin{align*}
	 & \g_m^T(\w_{m} - \w) + r(\w_{m+1}) - r(\w) \\
\leq & \frac{\|\w_m - \w\|^2}{2\eta} - \frac{\|\w_{m+1} - \w\|^2}{2\eta} + \g_m^T(\w_{m} - \w_{m+1}) - \frac{1}{2\eta}\|\w_{m} - \w_{m+1}\|^2 \\
\leq & \frac{\|\w_m - \w\|^2}{2\eta} - \frac{\|\w_{m+1} - \w\|^2}{2\eta} + \frac{\eta}{2}\|\g_m\|^2.
\end{align*}
Taking expectation on both sides, we obtain
\begin{align*}
     & \EB[\nabla F(\w_m)^T(\w_m - \w)  + r(\w_{m+1}) - r(\w)] \\
\leq & \EB[\frac{\|\w_m - \w\|^2}{2\eta} - \frac{\|\w_{m+1} - \w\|^2}{2\eta}] + \frac{\eta}{2}\EB[\|\g_m - \nabla F(\w_m)\|^2 + \| \nabla F(\w_m)\|^2] \\
\leq & \EB[\frac{\|\w_m - \w\|^2}{2\eta} - \frac{\|\w_{m+1} - \w\|^2}{2\eta}] + \frac{\eta}{2}\EB[\frac{\sigma^2}{b} + \| \nabla F(\w_m)\|^2].
\end{align*}
Summing up from $m=1$ to $M$, we obtain
\begin{align*}
     & \sum_{m=1}^{M}\EB[(\alpha - L\eta)(F(\w_m) - F(\w)) + r(\w_{m}) - r(\w)] \\
\leq & \frac{\|\w_1 - \w\|^2}{2\eta} + \frac{M\sigma^2\eta}{2b} + \EB[r(\w_1) - r(\w_{M+1})] \\
\leq & \frac{\|\w_1 - \w\|^2}{2\eta} + \frac{M\sigma^2\eta}{2b},
\end{align*}
which implies
\begin{align*}
	(\alpha - L\eta)\EB[F(\w_{\tau}) - F(\w^*)] \leq \frac{\|\tilde{\w} - \w_*\|^2}{2M\eta} + \frac{\sigma^2\eta}{2b} + \frac{1}{2\gamma}\|\tilde{\w} - \w_*\|^2.
\end{align*}
Since $\eta \leq \frac{\alpha}{2L}$, we obtain
\begin{align*}
	\EB[F(\w_{\tau}) - F(\w_*)] \leq \frac{\|\tilde{\w} - \w_*\|^2}{\alpha M\eta} + \frac{\sigma^2\eta}{\alpha b} + \frac{1}{\alpha\gamma}\|\tilde{\w} - \w_*\|^2.
\end{align*}

\subsection{Proof of Theorem~\ref{theorem:mb_sgd}}
Since $F(\tilde{\w}_1) - F(\w^*) \leq \epsilon_1$, we use the induction to prove the result. Assuming $F(\tilde{\w}_s) - F(\w^*) \leq \epsilon_s$, and using the PL condition, we obtain
\begin{align*}
	 \EB[F(\tilde{\w}_{s+1}) - F(\w^*)] \leq & (\frac{2}{\mu\alpha M_s\eta_s} + \frac{2}{\mu\alpha\gamma})(F(\tilde{\w}_s) - F(\w^*)) + \frac{\sigma^2\eta_s}{\alpha b_s} \\
\leq & (\frac{2b_s}{\mu\alpha C_s\eta_s} + \frac{2}{\mu\alpha\gamma})\epsilon_s + \frac{\sigma^2\eta_s}{\alpha b_s}.
\end{align*}
Since $C_s = \frac{\theta}{\epsilon_s}$ and $\eta_s = \sqrt{\frac{2b_s^2\epsilon_s}{\mu\sigma^2C_s}} = \frac{\sqrt{2}b_s\epsilon_s}{\sigma\sqrt{\mu\theta}} \leq \frac{\alpha}{2L}$, we obtain
\begin{align*}
	\EB[F(\tilde{\w}_{s+1}) - F(\w^*)] \leq \frac{2\sqrt{2}\sigma\epsilon_s}{\alpha \sqrt{\mu\theta}} + \frac{2\epsilon_s}{\mu\alpha\gamma}.
\end{align*}
By setting $\frac{1}{\gamma} \leq \frac{\mu\alpha}{4\rho}$ and $\theta\geq \frac{32\sigma^2\rho^2}{\alpha^2\mu}$, we obtain
\begin{align*}
	\EB[F(\w_{s+1}) - F(\w_*)] \leq \frac{\epsilon_s}{\rho} = \epsilon_{s+1}.
\end{align*}
Finally, we obtain that when $S = \log(\epsilon_1/\epsilon)$, $\EB[F(\w_{S+1}) - F(\w_*)] \leq \epsilon$.

\subsection{Proof of Lemma~\ref{lemma:delta_m}}
If $\mathcal{B}_{1,m} = \mathcal{B}_{2,m} \triangleq \mathcal{B}_{m}$, then we have
\begin{align*}
 	\delta_{m+1} \leq & \|\frac{\gamma\w_{1,m} + \eta\w_{1,1} - \gamma\eta\nabla f_{\mathcal{B}_m}(\w_{1,m})}{\gamma + \eta} - \frac{\gamma\w_{2,m} + \eta\w_{2,1} - \gamma\eta\nabla f_{\mathcal{B}_m}(\w_{2,m})}{\gamma + \eta}\| \\
	\leq & \frac{\eta}{\gamma + \eta}\|\w_{1,1}-\w_{2,1}\| + \frac{\gamma}{\gamma + \eta}\|\w_{1,m} - \eta\nabla f_{\mathcal{B}_m}(\w_{1,m}) - (\w_{2,m} - \eta\nabla f_{\mathcal{B}_m}(\w_{2,m}))\| \\
	=    & \frac{\eta}{\gamma + \eta}\delta_1 + \frac{\gamma(1+L\eta)}{\gamma + \eta}\delta_m.
\end{align*}
If $\mathcal{B}_{1,m} \neq \mathcal{B}_{2,m}$, then we have
\begin{align*}
	\delta_{m+1} \leq & \|\frac{\gamma\w_{1,m} + \eta\w_{1,1} - \gamma\eta\nabla f_{\mathcal{B}_{1,m}}(\w_{1,m})}{\gamma + \eta} - \frac{\gamma\w_{2,m} + \eta\w_{2,1} - \gamma\eta\nabla f_{\mathcal{B}_{2,m}}(\w_{2,m})}{\gamma + \eta}\| \\
	\leq & \frac{\eta}{\gamma + \eta}\delta_1 + \frac{\gamma}{\gamma + \eta}\delta_m + \frac{\gamma\eta}{\gamma + \eta}\|\nabla f_{\mathcal{B}_{1,m}}(\w_{1,m}) - \nabla f_{\mathcal{B}_{2,m}}(\w_{2,m})\| \\
	\leq & \frac{\eta}{\gamma + \eta}\delta_1 + \frac{\gamma}{\gamma + \eta}\delta_m + \frac{(b-1)L\gamma\eta}{b(\gamma + \eta)}\delta_m + \frac{2\gamma\eta G}{b(\gamma + \eta)},
\end{align*}
where the last inequality uses the fact $\mathcal{B}_{1,m}$ and $  \mathcal{B}_{2,m}$ differ in at most one instance.

\subsection{Proof of Theorem~\ref{theorem:stability}}
Let $\mathcal{E}$ be the event that SEBS picking $\xi_{i_0}$ for the first time happens at the $m_0$-iteration of the last stage, then we have
\begin{align*}
    \EB|f(\tilde{\w}_{1};\xi) - f(\tilde{\w}_{2};\xi)| \leq & G\PB(\mathcal{E})\EB[\|\tilde{\w}_{1} - \tilde{\w}_{2}\||\mathcal{E}] + \PB(\mathcal{E}^c)\EB[|f(\tilde{\w}_{1};\xi) - f(\tilde{\w}_{2};\xi)|\mathcal{E}^c] \\
\leq & G\EB[\|\tilde{\w}_{1} - \tilde{\w}_{2}\||\mathcal{E}] + \PB(\mathcal{E}).
\end{align*}
Let $j$ be the total iterations that mb-SGD using $\xi_{i_0}$ for the first time. Then we obtain
\begin{align*}
	\PB(\mathcal{E}) \leq & \PB(j \leq m_0 + \sum_{s=1}^{S-1}M_s) \leq \frac{bm_0}{n} + \sum_{s=1}^{S-1}\frac{b_sM_s}{n} \leq \frac{C}{n} + \frac{bm_0}{n},
\end{align*}
in which we use the inequality $\PB(j\leq H) \leq \sum_{h=1}^H \PB(j=h)$. Using Lemma~\ref{lemma:delta_m}, we obtain
\begin{align*}
	\EB[\delta_{m+1}|\mathcal{E}] \leq & (1-\frac{b}{n})\EB[\frac{\eta}{\gamma+\eta}\delta_1 + \frac{\gamma(1+L\eta)}{\gamma+\eta}\delta_m|\mathcal{E}] + \frac{b}{n}\EB[\frac{\eta}{\gamma+\eta}\delta_1 + \frac{b\gamma + (b-1)L\gamma\eta}{b(\gamma+\eta)}\delta_m + \frac{2\gamma\eta G}{b(\gamma+\eta)}|\mathcal{E}] \\
	= & (\frac{\gamma(1+L\eta)}{\gamma+\eta} - \frac{\gamma L\eta}{n(\gamma+\eta)})\EB[\delta_m|\mathcal{E}] + \frac{2\gamma\eta G}{n(\gamma+\eta)} \\
	\leq & (1 + (1 - \frac{\gamma}{n(\gamma+\eta)})L\eta)\EB[\delta_m|\mathcal{E}] + \frac{2\gamma\eta G}{n(\gamma+\eta)} \\
	\leq & (1 + \frac{2L}{\mu\alpha m})\EB[\delta_m|\mathcal{E}] + \frac{4\gamma G}{\mu\alpha(\gamma+\eta)nm},
\end{align*}
which implies that
\begin{align*}
	\EB[\delta_{m}|\mathcal{E}] \leq \frac{\gamma}{\gamma+\eta}\frac{2G}{ Ln}(\frac{M}{m_0})^{\frac{2L}{\mu\alpha}}.
\end{align*}
Then we obtain
\begin{align*}
	& \EB|f(\tilde{\w}_{1};\xi) - f(\tilde{\w}_{2};\xi)| \leq \frac{\gamma}{\gamma+\eta}\frac{2G^2}{Ln}(\frac{M}{m_0})^{\frac{2L}{\mu\alpha}} + \frac{bm_0}{n} +\frac{\sum_{t=1}^{T-1}C_t}{n}.
\end{align*}
By setting $m_0 = (\frac{4\gamma G^2}{(\gamma+\eta)b\mu\alpha})^{\frac{1}{2L/(\mu\alpha) + 1}}M^{\frac{2L/(\mu\alpha)}{2L/(\mu\alpha) + 1}}$, $q = 2L/(\mu\alpha)$, we obtain
\begin{align*}
	\EB|f(\tilde{\w}_{1};\xi) - f(\tilde{\w}_{2};\xi)| \leq & \frac{C}{n} + \frac{(1 + 1/q)}{n}(\frac{4\gamma G^2}{(\gamma+\eta)\mu\alpha})^{\frac{1}{1+q}}(bM)^{\frac{q}{q+1}} \\
=    & \frac{C}{n} + \frac{(1 + 1/q)}{n}(\frac{4\gamma G^2}{(\gamma+\eta)\mu\alpha})^{\frac{1}{1+q}}C^{\frac{q}{q+1}}
\end{align*}

\newpage
\section{Momentum SEBS}\label{appendix:msebs}

Similar to the convergence analysis for SEBS, we first establish the one-stage training error for for mSEBS:
\begin{lemma}\label{lemma:M}
	(One-stage training error for for mSEBS)~Let $\{\w_m\}$ be the sequence produced by $\mbox{\textit{mSGD}}(f,\mathcal{I}, \beta, \tilde{\w}, \eta, b, C)$. Then we have
	\begin{align}\label{equation:lemma M}
		& (\alpha - \frac{(1+\beta)L\eta}{(1-\beta)^2})\EB [F(\w_\tau) - F(\w^*)] \nonumber \\
	    \leq & \frac{(1-\beta)\|\tilde{\w} - \w^*\|^2}{2M\eta} + \frac{\sigma^2\eta}{2b(1-\beta)^2},
\end{align}
where $\w_\tau$ is the output of $\mbox{MSGD}$ and $M = C/b$.
\end{lemma}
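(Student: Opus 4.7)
The plan is to adapt the proof of Lemma~\ref{lemma:one stage error for pSGD} by introducing an auxiliary ``momentum-absorbed'' sequence that turns the two-step Polyak update into a plain SGD recursion. Define $\z_m = \w_m + \frac{\beta}{1-\beta}\u_m$. A direct manipulation of the two update lines in Algorithm~\ref{alg:msgd} gives $\z_{m+1} = \z_m - \tilde{\eta}\g_m$ with $\tilde{\eta} := \eta/(1-\beta)$, and $\z_1 = \tilde{\w}$ since $\u_1 = \0$. So $\{\z_m\}$ behaves like vanilla SGD on $F$ with step size $\tilde{\eta}$, and can be plugged into the distance-to-optimum identity used for pSGD. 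Expanding $\|\z_{m+1}-\w^*\|^2$, taking conditional expectation, and using $\EB\|\g_m\|^2 \le \sigma^2/b + \|\nabla F(\w_m)\|^2$, one gets a per-step bound whose only non-standard ingredient is the cross term $\EB[\nabla F(\w_m)^T(\z_m-\w^*)]$.

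The next step handles that cross term. Writing $\z_m - \w^* = (\w_m - \w^*) + \frac{\beta}{1-\beta}(\w_m - \w_{m-1})$ with the convention $\w_0 := \w_1$, I would apply Assumption~\ref{assumption:quasi convex} to the first summand to get $\alpha(F(\w_m)-F(\w^*))$, and use the one-sided smoothness inequality $\nabla F(\w_m)^T(\w_m - \w_{m-1}) \ge F(\w_m) - F(\w_{m-1}) - \tfrac{L}{2}\|\u_m\|^2$ on the momentum part. On the $\|\nabla F(\w_m)\|^2$ piece I would use the standard bound $\|\nabla F(\w_m)\|^2 \le 2L(F(\w_m)-F(\w^*))$. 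Summing the resulting per-step inequality from $m=1$ to $M$, the $F(\w_m) - F(\w_{m-1})$ differences telescope to $F(\w_M) - F(\tilde{\w})$, and the latter is absorbed by $F(\w_M) \ge F(\w^*)$, while $\EB\|\z_{M+1}-\w^*\|^2 \ge 0$ kills the left-over term.

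The $\|\u_m\|^2$ residuals are the one thing not covered by the pSGD argument and need a separate bound. From $\u_{m+1} = \beta\u_m - \eta\g_m$ and Young's inequality with parameter $(1-\beta)/\beta$, one gets the geometric recursion $\|\u_{m+1}\|^2 \le \beta\|\u_m\|^2 + \frac{\eta^2}{1-\beta}\|\g_m\|^2$. Iterating this (with $\u_1 = 0$), summing over $m$, and using the same $\EB\|\g_m\|^2$ bound as above yields
\[
  \sum_{m=1}^M \EB\|\u_m\|^2 \le \frac{M\eta^2\sigma^2}{b(1-\beta)^2} + \frac{2L\eta^2}{(1-\beta)^2}\sum_{m=1}^M \EB[F(\w_m)-F(\w^*)].
\]
Plugging this into the summed per-step inequality, collecting all coefficients of $\sum_m\EB[F(\w_m)-F(\w^*)]$ on the left, using $M\EB[F(\w_\tau)-F(\w^*)] = \sum_{m=2}^{M+1}\EB[F(\w_m)-F(\w^*)]$, and finally dividing by $2M\tilde{\eta}=2M\eta/(1-\beta)$ produces the claimed inequality.

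The main obstacle is the bookkeeping around $\|\u_m\|^2$. The Young parameter $(1-\beta)/\beta$ is the unique choice that preserves a geometric contraction ratio $\beta$ in the $\u_m$-recursion while keeping the noise prefactor at order $1/(1-\beta)^2$, which is exactly what is needed for the $(1-\beta)^2$ denominators in the target bound. The coefficient $(1+\beta)L\eta/(1-\beta)^2$ on $\EB[F(\w_\tau)-F(\w^*)]$ emerges only after carefully combining two contributions: the direct $\tilde{\eta}^2\|\nabla F(\w_m)\|^2 \le 2L\tilde{\eta}^2(F-F^*)$ piece, and the indirect piece routed through $\|\u_m\|^2$ (coefficient $\tfrac{\eta\beta L}{(1-\beta)^2}$) which itself contains an $F-F^*$ term once the recursion is unrolled. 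Ensuring that the telescoping boundary term $F(\tilde{\w})-F(\w_M)$ does not reintroduce a $\|\tilde{\w}-\w^*\|^2$ factor with the wrong constant, and that the higher-order-in-$\eta$ residuals from unrolling $\|\u_m\|^2$ are absorbed under the step-size constraint implicit in $(1+\beta)L\eta/(1-\beta)^2 < \alpha$, is the most delicate part of the computation.
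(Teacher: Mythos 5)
Your auxiliary sequence $\z_m=\w_m+\frac{\beta}{1-\beta}(\w_m-\w_{m-1})$ and the Young-based recursion $\EB\|\u_{m+1}\|^2\le\beta\EB\|\u_m\|^2+\frac{\eta^2}{1-\beta}\EB\|\g_m\|^2$ are exactly the devices the paper uses (its $\v_m$ and its unrolled bound on $\sum_m\EB\|\u_m\|^2$). The divergence, and the gap, is in how you treat the momentum part of the cross term $\nabla F(\w_m)^T(\z_m-\w^*)$. You lower-bound $\nabla F(\w_m)^T\u_m\ge F(\w_m)-F(\w_{m-1})-\frac{L}{2}\|\u_m\|^2$ and telescope; after summation this leaves the boundary term
\begin{align*}
\frac{2\eta\beta}{(1-\beta)^2}\,\EB\bigl[F(\tilde{\w})-F(\w_M)\bigr]
\end{align*}
on the right-hand side, i.e.\ an extra additive $\frac{\beta}{M(1-\beta)}\EB[F(\tilde{\w})-F(\w^*)]$ in the final bound. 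This term is strictly positive in general and does not appear in (\ref{equation:lemma M}); it cannot be folded into $\frac{(1-\beta)\|\tilde{\w}-\w^*\|^2}{2M\eta}$ with the stated constant (even using $\nabla F(\w^*)=\0$ and smoothness it becomes an additional $\frac{\beta L}{2M(1-\beta)}\|\tilde{\w}-\w^*\|^2$), nor into the left-hand side, since only the single summand $F(\w_1)-F(\w^*)$ could absorb it, which would require $\beta/(1-\beta)\le\alpha$ and would break the uniform averaging that defines $\w_\tau$. Moreover, your route gives the coefficient $\alpha-\frac{L\eta}{1-\beta}-\frac{\beta L^2\eta^2}{(1-\beta)^3}$ on $\EB[F(\w_\tau)-F(\w^*)]$ and the noise term $\frac{\sigma^2\eta}{2b(1-\beta)}+\frac{\beta L\sigma^2\eta^2}{2b(1-\beta)^3}$; these match the claimed $\frac{(1+\beta)L\eta}{(1-\beta)^2}$ and $\frac{\sigma^2\eta}{2b(1-\beta)^2}$ only under an extra step-size condition of the form $L\eta\le 1-\beta$, which the lemma does not assume. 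So as outlined you prove a weaker variant of the statement, not the statement itself.

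The fix is the paper's treatment of the cross term: bound $\frac{2\beta\eta}{(1-\beta)^2}|\nabla F(\w_m)^T\u_m|\le\frac{\beta\eta}{(1-\beta)^2}\bigl(\frac{\eta}{1-\beta}\|\nabla F(\w_m)\|^2+\frac{1-\beta}{\eta}\|\u_m\|^2\bigr)$, then use $\|\nabla F(\w_m)\|^2\le 2L(F(\w_m)-F(\w^*))$ together with your unrolled bound on $\sum_m\EB\|\u_m\|^2$. This keeps every contribution expressed through $F(\w_m)-F(\w^*)$ and the gradient noise, produces no telescoping boundary term, and yields exactly the coefficients $\frac{(1+\beta)L\eta}{(1-\beta)^2}$ and $\frac{\sigma^2\eta}{2b(1-\beta)^2}$ without any additional restriction on $\eta$.
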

Then we have the following convergence result:
\begin{theorem}\label{theorem:mb_msgd}
	Let $F(\tilde{\w}_1) - F(\w^*) \leq \epsilon_1$ and $\{\tilde{\w}_s\}$ be the sequence produced by
	$$\tilde{\w}_{s+1} = \mbox{\textit{mSGD}}(f, \mathcal{I}, \beta, \tilde{\w}_s, \eta, b_s, C_s),$$ where $C_s = \theta/\epsilon_s$, and
	\begin{align}\label{equation:key relation of msgd}
		\eta = \frac{\alpha(1-\beta)^2}{2(1+\beta)L}, b_s=\frac{\alpha\sigma\sqrt{\mu\theta(1-\beta)}}{2\sqrt{2}(1+\beta)L\epsilon_s}.
	\end{align}
	Then we obtain $\EB[F(\tilde{\w}_{s}) - F(\w^*)] \leq \epsilon_s, \forall s\geq 1$. If $S =\log_\rho(\epsilon_1/\epsilon)$, then $\EB[F(\tilde{\w}_{S+1}) - F(\w^*)] \leq \epsilon$. Here, $\theta = 8\sigma^2\rho^2/(\alpha^2\mu(1-\beta))$ and $\epsilon_{s+1} = \epsilon_s/\rho, s\geq 1$.
\end{theorem}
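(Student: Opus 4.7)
The plan is to mirror the induction argument used in Theorem~\ref{theorem:mb_sgd}, with Lemma~\ref{lemma:M} playing the role of Lemma~\ref{lemma:one stage error for pSGD}. I would induct on $s$, with the base case given by hypothesis. For the inductive step I would assume $\EB[F(\tilde{\w}_s)-F(\w^*)]\le\epsilon_s$ and apply Lemma~\ref{lemma:M} at the $s$-th stage. The constant choice $\eta=\alpha(1-\beta)^2/(2(1+\beta)L)$ is made precisely so that the coefficient $\alpha-(1+\beta)L\eta/(1-\beta)^2$ on the left-hand side of~(\ref{equation:lemma M}) equals $\alpha/2$, which is stage-independent and safely positive; in particular this also gives $\eta\le\alpha/(2L)$ automatically for $\beta\in[0,1)$.

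Next I would pass from the gap $\|\tilde{\w}_s-\w^*\|^2$ to the function suboptimality via the quadratic growth implied by the $\mu$-PL condition, so that $\EB\|\tilde{\w}_s-\w^*\|^2\le 2\epsilon_s/\mu$. After substituting $M_s=C_s/b_s$, $C_s=\theta/\epsilon_s$ and this PL bound into~(\ref{equation:lemma M}), the one-stage recursion reduces to
\begin{align*}
\EB[F(\tilde{\w}_{s+1})-F(\w^*)]\le\frac{2(1-\beta)b_s\epsilon_s}{\mu\alpha C_s\eta}+\frac{\sigma^2\eta}{\alpha b_s(1-\beta)^2}.
\end{align*}
The prescribed $b_s$ in~(\ref{equation:key relation of msgd}) is exactly the value that balances these two terms, and the choice $\theta=8\sigma^2\rho^2/(\alpha^2\mu(1-\beta))$ is calibrated so that, at the balancing $b_s$, each term evaluates to $\epsilon_s/(2\rho)$. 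Summing then gives $\EB[F(\tilde{\w}_{s+1})-F(\w^*)]\le\epsilon_s/\rho=\epsilon_{s+1}$, which closes the induction. Chaining across $S=\log_\rho(\epsilon_1/\epsilon)$ stages delivers $\EB[F(\tilde{\w}_{S+1})-F(\w^*)]\le\epsilon$.

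The main obstacle is simply bookkeeping: the momentum-dependent factors $(1-\beta)$ and $(1+\beta)$ inherited from Lemma~\ref{lemma:M} appear with different exponents in the two terms above, and must be tracked carefully when solving for the balancing $b_s$ and when verifying that the constant $\theta$ yields exactly $\epsilon_s/(2\rho)$ per term. No new analytic idea beyond the SEBS proof is required; all of the momentum-specific perturbation of the vanilla SGD bound is already absorbed into Lemma~\ref{lemma:M}, so the outer argument is essentially identical to that of Theorem~\ref{theorem:mb_sgd}.
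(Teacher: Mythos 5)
Your proposal matches the paper's own argument essentially step for step: induction over stages, apply Lemma~\ref{lemma:M} with $\eta=\alpha(1-\beta)^2/(2(1+\beta)L)$ so the left-hand coefficient becomes $\alpha/2$, convert $\|\tilde{\w}_s-\w^*\|^2$ to $2\epsilon_s/\mu$ via the quadratic growth implied by the PL condition, substitute $M_s=C_s/b_s$ and $C_s=\theta/\epsilon_s$, and observe that the prescribed $b_s$ balances the two terms at $\epsilon_s/(2\rho)$ each (so their sum is $\epsilon_s/\rho=\epsilon_{s+1}$), which is exactly the computation the paper performs. Your bookkeeping of the $(1-\beta)$ and $(1+\beta)$ factors checks out, so the proposal is correct and not a different route.
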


To prove the above lemma and theorem, we give the following lemma:
\begin{lemma}\label{lemma:momentum sgd v}
	Let $\{\w_m\}$ be the sequence produced by $\mathcal{M}(f,\mathcal{I}, \beta, \tilde{\w}, \eta, b, C)$ and define $\v_1 = \w_1$,
	\begin{align*}
		\v_m = \w_m + \frac{\beta}{1-\beta}(\w_m - \w_{m-1}), m\geq 2.
	\end{align*}
	Then we have
	\begin{align*}
		\|\v_{m+1} - \w^*\| \leq \|\v_m -\w^* - \frac{\eta}{1-\beta}\g_m\|.
	\end{align*}
	Specifically, if $\Omega = \RB^d$~\citep{ijcai2018-410}, then we have $\v_{m+1} = \v_m - \frac{\eta}{1-\beta}\g_m$.
\end{lemma}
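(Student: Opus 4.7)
The plan is a direct algebraic verification of the auxiliary-sequence identity that reformulates Polyak's heavy-ball iteration as a single-step gradient update on the shifted variable $\v_m$. The essential observation is that the mSGD recursions $\u_{m+1} = \beta\u_m - \eta\g_m$ and $\w_{m+1} = \w_m + \u_{m+1}$ imply, for $m \geq 2$, the two-step form
\begin{align*}
\w_{m+1} - \w_m \;=\; \u_{m+1} \;=\; \beta(\w_m - \w_{m-1}) - \eta\g_m,
\end{align*}
which is exactly what is needed to collapse the ``momentum memory'' into a plain gradient step on $\v_m$.

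For $m \geq 2$, I would expand the difference $\v_{m+1} - \v_m$ straight from the definition of $\v_m$:
\begin{align*}
\v_{m+1} - \v_m \;=\; \tfrac{1}{1-\beta}(\w_{m+1}-\w_m) \;-\; \tfrac{\beta}{1-\beta}(\w_m-\w_{m-1}),
\end{align*}
and then substitute the two-step form above into $\w_{m+1}-\w_m$. The $(\w_m-\w_{m-1})$ terms cancel exactly, leaving $\v_{m+1} - \v_m = -\frac{\eta}{1-\beta}\g_m$, which is the claimed equality in the unconstrained case.

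The base case $m=1$ needs separate attention because $\v_1$ is defined without reference to any $\w_0$. Since $\u_1 = \0$, the first mSGD step gives $\w_2 = \w_1 - \eta\g_1$, so $\v_2 = \w_2 + \frac{\beta}{1-\beta}(\w_2-\w_1) = \w_1 - \frac{\eta}{1-\beta}\g_1 = \v_1 - \frac{\eta}{1-\beta}\g_1$, matching the same formula; alternatively one could adopt the convention $\w_0 := \w_1$ and treat both cases uniformly. For the general $\leq$ statement, which covers the setting of a projection $\Pi_\Omega$ onto a convex feasible set $\Omega \ni \w^*$, the actual iterate $\v_{m+1}$ is the projection of the unconstrained step $\v_m - \frac{\eta}{1-\beta}\g_m$ onto $\Omega$, so non-expansiveness of Euclidean projection yields $\|\v_{m+1}-\w^*\| \leq \|\v_m - \w^* - \frac{\eta}{1-\beta}\g_m\|$. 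No step here is genuinely difficult; the only mild subtlety is the $m=1$ bookkeeping, and the rest is pure manipulation of the heavy-ball recursion.
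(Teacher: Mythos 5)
Your unconstrained argument is correct and is essentially the paper's computation in different clothing: expanding $\v_{m+1}-\v_m$ and substituting $\w_{m+1}-\w_m=\beta(\w_m-\w_{m-1})-\eta\g_m$ gives $\v_{m+1}=\v_m-\frac{\eta}{1-\beta}\g_m$, and your $m=1$ bookkeeping (using $\u_1=\0$, or the convention $\w_0=\w_1$) is fine. Since Algorithm~\ref{alg:msgd} has no projection, this is all that is actually needed downstream in Lemma~\ref{lemma:M}.

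However, your justification of the general inequality in the projected case has a genuine flaw: it is not true that $\v_{m+1}=\Pi_{\Omega}\bigl(\v_m-\frac{\eta}{1-\beta}\g_m\bigr)$. With $\tilde{\w}_{m+1}=\w_m+\beta(\w_m-\w_{m-1})-\eta\g_m$ and $\w_{m+1}=\Pi_{\Omega}(\tilde{\w}_{m+1})$, one has $\v_{m+1}=\frac{1}{1-\beta}\Pi_{\Omega}(\tilde{\w}_{m+1})-\frac{\beta}{1-\beta}\w_m$, i.e.\ $\v_{m+1}$ is the projection of $\v_m-\frac{\eta}{1-\beta}\g_m$ onto the scaled-and-shifted set $\frac{1}{1-\beta}\Omega-\frac{\beta}{1-\beta}\w_m$, not onto $\Omega$ itself, so non-expansiveness of $\Pi_{\Omega}$ cannot be invoked as you state it. The paper instead writes $\v_{m+1}-\w^*=\frac{1}{1-\beta}\bigl[\Pi_{\Omega}(\tilde{\w}_{m+1})-(\beta\w_m+(1-\beta)\w^*)\bigr]$ and applies non-expansiveness against the point $\beta\w_m+(1-\beta)\w^*$, which lies in $\Omega$ by convexity; equivalently, your argument can be repaired by observing that $\w^*$ belongs to the shifted set above for the same convexity reason. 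As written, though, the inequality part of the lemma is not established by your proposal.
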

\begin{proof}
Since $\v_m = \w_m + \frac{\beta}{1-\beta}(\w_m - \w_{m-1}), m\geq 2$, we obtain
\begin{align*}
	\|\v_{m+1} - \w^*\| = & \|\w_{m+1} + \frac{\beta}{1-\beta}(\w_{m+1} - \w_m) - \w^*\| \\
	= & \frac{1}{1-\beta}\|\w_{m+1} - (\beta\w_m + (1-\beta)\w^*)\| \\
	= & \frac{1}{1-\beta}\|\Pi_{\Omega}(\w_m -\eta\g_m + \beta(\w_m - \w_{m-1})) - (\beta\w_m + (1-\beta)\w^*)\| \\
	\leq & \frac{1}{1-\beta}\|\w_m -\eta\g_m + \beta(\w_m - \w_{m-1}) - (\beta\w_m + (1-\beta)\w^*)\| \\
	= & \frac{1}{1-\beta}\|(1-\beta)\w_m + \beta(\w_m - \w_{m-1}) -  (1-\beta)\w^* - \eta\g_m\| \\
	= & \|\v_m -\w^* - \frac{\eta}{1-\beta}\g_m\|.
\end{align*}
\end{proof}

\subsection{Proof of Lemma~\ref{lemma:M}}
Let $\u_{1} = \0, \u_m = \w_m - \w_{m-1}$. Then we have
\begin{align*}
    \EB\|\v_{m+1} - \w^*\|^2 = & \|\v_{m} - \w^*\|^2 - \frac{2\eta}{1-\beta}\nabla F(\w_m)^T(\v_{m} - \w^*) + \frac{\eta^2}{(1-\beta)^2}\EB\|\g_m\|^2 \\
=    & \|\v_{m} - \w^*\|^2 - \frac{2\eta}{1-\beta}\nabla F(\w_m)^T(\w_{m} - \w^*)  \\
     & + \frac{2\eta}{1-\beta}\nabla F(\w_m)^T(\w_m - \v_m) + \frac{\eta^2}{(1-\beta)^2}[\frac{\sigma^2}{b} + \|\nabla F(\w_m)\|^2] \\
=    & \|\v_{m} - \w^*\|^2 - \frac{2\eta}{1-\beta}\nabla F(\w_m)^T(\w_{m} - \w^*) - \frac{2\beta\eta}{(1-\beta)^2}\nabla F(\w_m)^T\u_m \\
     & + \frac{\eta^2}{(1-\beta)^2}[\frac{\sigma^2}{b} + \|\nabla F(\w_m)\|^2] \\
\leq & \|\v_{m} - \w^*\|^2 - \frac{2\alpha\eta}{1-\beta}(F(\w_m) - F(\w^*)) + \frac{2\beta\eta}{(1-\beta)^2}|\nabla F(\w_m)^T\u_m| \\
     & + \frac{\eta^2}{(1-\beta)^2}[\frac{\sigma^2}{b} + 2L(F(\w_m) - F(\w^*))]
\end{align*}
Using the Young's inequality that $(a+b)^2 \leq (1+\rho)a^2 + (1+\frac{1}{\rho})b^2, \forall \gamma>0, a, b$ and the fact $\|\u_{m+1}\| \leq \|\beta\u_m - \eta\g_m\|$, we obtain
\begin{align*}
	\EB\|\u_{m}\|^2 \leq & \beta\|\u_{m-1}\|^2 + \frac{\eta^2}{1-\beta}\EB\|\g_{m-1}\|^2 \leq \frac{\eta^2}{1-\beta}\sum_{i=1}^{m-1} \beta^{{m-1}-i}\|\g_i\|^2
\end{align*}
We denote $F_m = F(\w_m) - F(\w^*)$ for short. Then we have
\begin{align*}
\sum_{m=1}^M \frac{2\beta\eta}{(1-\beta)^2}\EB|\nabla F(\w_m)^T\u_m| \leq & \sum_{m=1}^M \frac{\beta\eta}{(1-\beta)^2}[\frac{\eta}{1-\beta}\EB\|\nabla F(\w_m)\|^2 + \frac{1-\beta}{\eta}\EB\|\u_m\|^2] \\
=    & \sum_{m=1}^M \{\frac{\beta\eta^2}{(1-\beta)^3}\EB\|\nabla F(\w_m)\|^2 + \frac{\beta}{(1-\beta)}[\frac{\eta^2}{1-\beta}\sum_{i=1}^{m-1} \beta^{{m-1}-i}\EB\|\g_i\|^2]\} \\
\leq & \sum_{m=1}^M \frac{2L\beta\eta^2}{(1-\beta)^3}\EB[F_m] + \frac{\beta\eta^2}{(1-\beta)^2}\sum_{m=1}^M\sum_{i=1}^{m-1} \beta^{{m-1}-i}\EB\|\g_i\|^2 \\
=    & \sum_{m=1}^M \frac{2L\beta\eta^2}{(1-\beta)^3}\EB[F_m] + \frac{\beta\eta^2}{(1-\beta)^2}\sum_{i=1}^{M-1}\EB\|\g_i\|^2\sum_{m=i+1}^{M} \beta^{{m-1}-i} \\
\leq & \sum_{m=1}^M \frac{2L\beta\eta^2}{(1-\beta)^3}\EB[F_m] + \frac{\beta\eta^2}{(1-\beta)^3}\sum_{i=1}^{M-1}\EB\|\g_i\|^2 \\
\leq & \frac{4L\beta\eta^2}{(1-\beta)^3}\sum_{m=1}^M \EB[F_m] + \frac{M\beta\sigma^2\eta^2}{b(1-\beta)^3},
\end{align*}
and hence
\begin{align*}
    & [\frac{2\alpha}{1-\beta} - \frac{4L\beta\eta}{(1-\beta)^3} - \frac{2L\eta}{(1-\beta)^2}]\EB [F(\w_\tau) - F(\w^*)] \leq \frac{\|\tilde{\w} - \w^*\|^2}{M\eta} + \frac{\beta\sigma^2\eta}{b(1-\beta)^3} + \frac{\sigma^2\eta}{b(1-\beta)^2}.
\end{align*}
i.e.,
\begin{align*}
	(\alpha - \frac{(1+\beta)L\eta}{(1-\beta)^2})\EB [F(\w_\tau) - F(\w^*)] \leq \frac{(1-\beta)\|\tilde{\w} - \w^*\|^2}{2M\eta} + \frac{\sigma^2\eta}{2b(1-\beta)^2}.
\end{align*}

\subsection{Proof of Theorem~\ref{alg:mb_msgd}}
Since $\eta_s \leq \alpha(1-\beta)^2/(2(1+\beta)L)$, using PL condition, we obtain
\begin{align*}
	\EB [F(\tilde{\w}_{s+1}) - F(\w^*)] \leq & \frac{2b_s(1-\beta)[F(\tilde{\w}_{s}) - F(\w^*)]}{\mu\alpha C_s\eta_s} + \frac{\sigma^2\eta_s}{\alpha b_s(1-\beta)^2}.
\end{align*}
Since $F(\tilde{\w}_1) - F(\w^*) \leq \epsilon_1$, we use the induction to prove the result. Assuming $F(\tilde{\w}_s) - F(\w^*) \leq \epsilon_s$, then we have
\begin{align*}
	\EB [F(\tilde{\w}_{s+1}) - F(\w^*)] \leq & \frac{2b_s(1-\beta)\epsilon_s}{\mu\alpha C_s\eta_s} + \frac{\sigma^2\eta_s}{\alpha b_s(1-\beta)^2} \\
	= & 2\sqrt{\frac{2\epsilon_s\sigma^2}{\mu C_s(1-\beta)\alpha^2}  } \tag{using definition of $\eta_s$} \\
	= & 2\sqrt{\frac{2\epsilon_s^2\sigma^2}{\mu \theta(1-\beta)\alpha^2}  } \tag{using definition of $C_s$} \\
	= & \frac{2\sqrt{2}\sigma\epsilon_s}{\sqrt{\mu \theta(1-\beta)}\alpha} \\
	= & \epsilon_s/\rho \tag{using definition of $\theta$} \\
	= & \epsilon_{s+1}.
\end{align*}

\section{AdaSEBS}\label{appendix:adasebs}

First, we have the following inequality: $\forall y\geq x > 0$,
\begin{align*}
\exists z\in [x,y], s.t. ~\ln(y) - \ln(x) = \frac{y - x}{z} \geq \frac{y - x}{y}.
\end{align*}

We define $\psi_m^*(\u) = \sup_\w(\w^T\u - \frac{1}{\eta}\psi_m(\w))$.
\subsection{Proof of Lemma~\ref{lemma:adagrad}}
Let $\z_0 = \0$, $\z_m = \sum_{i=1}^m \g_i$ . First, we have
\begin{align*}
	\alpha\sum_{m=1}^M [F(\w_m) - F(\w^*)] \leq & \sum_{m=1}^M \nabla  F(\w_m)^T(\w_m - \w^*) \\
	=    & \sum_{m=1}^M \g_m^T(\w_m - \w^*) + \sum_{m=1}^M \Delta_m \\
    =    & \sum_{m=1}^M \g_m^T\w_m  + \frac{1}{\eta}\psi_M(\w^*) + (-\sum_{m=1}^M \g_m^T\w^* - \frac{1}{\eta}\psi_M(\w^*)) + \sum_{m=1}^M \Delta_m \\
    \leq & \sum_{m=1}^M \g_m^T\w_m  + \frac{1}{\eta}\psi_M(\w^*) + \psi_M^*(-\sum_{m=1}^M\g_m) + \sum_{m=1}^M \Delta_m \\
    =    & \sum_{m=1}^M \g_m^T\w_m  + \frac{1}{\eta}\psi_M(\w^*) + \psi_M^*(-\z_M) + \sum_{m=1}^M \Delta_m,
\end{align*}
where $\Delta_m = (\nabla F(\w_m) - \g_m)^T(\w_m - \w^*)$. For the $\psi_M^*(-\z_M)$, we have
\begin{align*}
	\psi_M^*(-\z_M) = & -\w_{M+1}^T\z_M - \frac{1}{\eta}\psi_M(\w_{M+1}) \\
	\leq & -\w_{M+1}^T\z_M - \frac{1}{\eta}\psi_{M-1}(\w_{M+1}) \\
	\leq & \psi_{M-1}^*(-\z_M) \\
	\leq & \psi_{M-1}^*(-\z_{M-1}) - \g_M^T\nabla \psi_{M-1}^*(-\z_{M-1}) + \frac{\eta}{2}\|\g_M\|^2_{\psi_{M-1}^*} \\
	=    & \psi_{M-1}^*(-\z_{M-1}) - \g_M^T\w_{M} + \frac{\eta}{2}\|\g_M\|^2_{\psi_{M-1}^*}.
\end{align*}
Since $\psi_0^*(\0) \leq 0$, we obtain
\begin{align*}
	\sum_{m=1}^M \g_m^T\w_m + \psi_M^*(-\z_M) \leq & \sum_{m=1}^{M-1} \g_m^T\w_m + \psi_{M-1}^*(-\z_{M-1}) + \frac{\eta}{2}\|\g_M\|^2_{\psi_{M-1}^*} \\
	\leq & \cdots \\
	\leq & \frac{\eta}{2}\sum_{m=1}^M\|\g_m\|^2_{\psi_{m-1}^*}.
\end{align*}
Combining the above inequalities, we obtain
\begin{align*}
	\frac{\alpha}{M}\sum_{m=1}^M [F(\w_m) - F(\w^*)] \leq & \frac{1}{M\eta}\psi_M(\w^*) + \frac{\eta}{2M}\sum_{m=1}^M\|\g_m\|^2_{\psi_{m-1}^*} + \frac{1}{M}\sum_{m=1}^M \Delta_m.
\end{align*}
Since
\begin{align*}
	\sum_{m=1}^M\|\g_m\|^2_{\psi_{m-1}^*} = \sum_{m=1}^M\sum_{j=1}^d\frac{g_{m,j}^2}{\delta^2 + \sum_{i=1}^{m-1} g_{i,j}^2},
\end{align*}
we define $S_{0,j} = \delta^2, S_{m,j} = \sum_{i=0}^m g_{i,j}^2$, then
\begin{align*}
	\sum_{m=1}^M\|\g_m\|^2_{\psi_{m-1}^*} = & \sum_{j=1}^d\sum_{m=1}^M\frac{S_{m,j} - S_{m-1,j}}{S_{m-1,j}} \leq 2\sum_{j=1}^d\sum_{m=1}^M (\ln S_{m,j} - \ln S_{m-1,j}) \\ 
	= & 2\sum_{j=1}^d \ln(\frac{S_{M,j}}{S_{0,j}}) = 2\sum_{j=1}^d \ln(\frac{\sum_{m=1}^M g_{m,j}^2}{\delta^2} + 1).
\end{align*}
and
\begin{align*}
	\alpha X \leq \frac{1}{M\eta}\sum_{j=1}^d(\delta^2 + \|\g_{1:M,j}\|^2)(w_{1,j} - w_{j}^*)^2 + \frac{\eta}{M}\sum_{j=1}^d \ln(\frac{\|\g_{1:M,j}\|^2}{\delta^2}+1) + \frac{1}{M}\sum_{m=1}^M\Delta_m,
\end{align*}
where $X = \frac{1}{M}\sum_{m=1}^M [F(\w_m) - F(\w^*)]$. On the other hand, we have
\begin{align*}
	\EB[\sum_{j=1}^d\|\g_{1:M,j}\|^2] = \EB[\sum_{m=1}^M\|\g_m\|^2] \leq \EB[\frac{M\sigma^2}{b} + 2LMX].
\end{align*}

Using the inequality $\forall x\geq 0, \ln(x+1)\leq x$, we obtain
\begin{align*}
	\alpha \EB[X] \leq & \frac{\delta^2}{M\eta}\|\w_1 - \w^*\|^2 + (\frac{\|\w_1 - \w^*\|^2}{\eta} + \frac{\eta}{\delta^2})(\frac{1}{M}\sum_{m=1}^M\EB\|\g_m\|^2) \\
\leq & \frac{\delta^2}{M\eta}\|\w_1 - \w^*\|^2 + (\frac{\|\w_1 - \w^*\|^2}{\eta} + \frac{\eta}{\delta^2})(\EB[\frac{\sigma^2}{b} + 2LX]).
\end{align*}
Since $\eta^2 \geq \delta^2\|\w_1 - \w^*\|^2$ and $\eta \leq \alpha\delta^2/(8L)$, we obtain
\begin{align*}
    \frac{\alpha}{2} \EB[X] \leq & \frac{\delta^2}{M\eta}\|\w_1 - \w^*\|^2 + \frac{2\sigma^2\eta}{b\delta^2}.
\end{align*}
By choosing $\eta^2 = \frac{\delta^4\|\w_1 - \w^*\|^2b}{2M\sigma^2}$, we obtain
\begin{align*}
     \EB[X] \leq & \frac{4\sqrt{2}\|\w_1 - \w^*\|\sigma}{\alpha\sqrt{C}}.
\end{align*}
The conditions for the batch size and learning rate are
\begin{align*}
  & \delta\|\w_1 - \w^*\|\leq \eta = \frac{\delta^2\|\w_1 - \w^*\|b}{\sqrt{2C}\sigma} \leq \frac{\alpha\delta^2}{8L},
\end{align*}
i.e.,
\begin{align*}
  & \frac{\delta^2b^2}{2C\sigma^2} \geq 1 \mbox{~and~} \frac{\|\w_1 - \w^*\|b}{\sqrt{2C}\sigma} \leq \frac{\alpha}{8L}.
\end{align*}

\subsection{Proof of Theorem~\ref{theorem:mb_adagrad}}
Since $\delta \geq \frac{8\sqrt{2}L}{\sqrt{\mu}\alpha}$, $C_s = \theta/\epsilon_s$, $b_s = \frac{\sqrt{\mu\theta}\alpha\sigma}{8L\epsilon_s}$, $\eta_s = \frac{\alpha\delta^2}{8L}$, we have
\begin{align*}
	\frac{\delta^2b_s^2}{2C_s\sigma^2} = \frac{\delta^2}{2\sigma^2}\frac{\mu\theta\alpha^2\sigma^2}{64L^2\epsilon_s^2}\frac{\epsilon_s}{\theta} = \frac{\mu\alpha^2\delta^2}{128L^2\epsilon_s} \geq \frac{\mu\alpha^2\delta^2}{128L^2\epsilon_1} \geq 1.
\end{align*}
Then we can use Lemma~\ref{lemma:adagrad} and PL condition to obtain
\begin{align*}
	\EB[F(\tilde{\w}_{s+1}) - F(\w_s)] \leq & \frac{4\delta^2}{\mu\alpha M_s\eta_s}\EB[F(\tilde{\w}_s) - F(\w_s)] + \frac{4\sigma^2\eta_s}{\alpha b_s\delta^2}.
\end{align*}
Since $F(\tilde{\w}_1) - F(\w^*) \leq \epsilon_1$, we use the induction to proof the result. Assuming $\EB[F(\tilde{\w}_s) - F(\w^*)] \leq \epsilon_s$, then we have
\begin{align*}
	\EB[F(\tilde{\w}_{s+1}) - F(\w_s)] \leq & \frac{4b_s\delta^2\epsilon_s}{\mu\alpha C_s\eta_s} + \frac{4\sigma^2\eta_s}{\alpha b_s\delta^2}.
\end{align*}
Since $C_s = \theta/\epsilon_s$, $\eta_s = \delta^2\sqrt{\frac{b_s^2\epsilon_s}{\mu C_s\sigma^2}} = \frac{\delta^2 b_s\epsilon_s}{\sqrt{\mu\theta}\sigma}$, we obtain
\begin{align*}
	\EB[F(\tilde{\w}_{s+1}) - F(\w_s)] \leq & \frac{8\sigma}{\alpha\sqrt{\mu\theta}}\epsilon_s.
\end{align*}
Since $\theta = \frac{64\sigma^2\rho^2}{\mu\alpha^2}$, we obtain $\EB[F(\tilde{\w}_{s+1}) - F(\w^*)] \leq \epsilon_s/\rho = \epsilon_{s+1}$. Since $S = \log(\epsilon_1 / \epsilon)$, then we have $\EB[F(\tilde{\w}_{S+1}) - F(\w^*)] \leq \epsilon$.

\end{document}